\title{
On Margin-Based Cluster Recovery \\ with Oracle Queries
}
\newcommand{\scI}{\mathcal{I}}
\newcommand{\scH}{\mathcal{H}}
\newcommand{\scO}{\mathcal{O}}
\newcommand{\scA}{\mathcal{A}}
\newcommand{\scP}{\mathcal{P}}
\newcommand{\scU}{\mathcal{U}}
\newcommand{\CoverNum}{\mathcal{N}} 
\newcommand{\PackNum}{\mathcal{M}} 
\newcommand{\field}[1]{\mathbb{#1}}
\newcommand{\R}{\field{R}}
\newcommand{\Nat}{\field{N}}
\newcommand{\E}{\field{E}}
\renewcommand{\Pr}{\field{P}}
\newcommand{\norm}[1]{ \left\|{#1}\right\| }
\newcommand{\ve}{\varepsilon}
\newcommand{\C}{\mathcal{C}}
\newcommand{\scX}{\mathcal{X}}
\newcommand{\Hyp}{\mathcal{H}}
\renewcommand{\hat}{\widehat}
\renewcommand{\bar}{\overline}
\renewcommand{\epsilon}{\ve}
\newtheorem{lemma}{Lemma}
\newtheorem{theorem}{Theorem}
\newtheorem{definition}{Definition}
\DeclareMathOperator{\poly}{poly}
\newcommand{\scq}{\textsc{scq}}
\newcommand{\dotp}[1]{\left\langle{#1}\right\rangle}
\newcommand{\vol}{\operatorname{vol}}
\newcommand{\conv}{\operatorname{conv}} 
\renewcommand{\norm}[2]{\|#2\|_{#1}}
\newcommand{\orig}{\boldsymbol{0}}
\newcommand{\vcdim}{\ensuremath{\operatorname{vc-dim}}}
\newcommand{\diam}{\phi}
\newcommand{\sldim}{\operatorname{sl}}
\newcommand{\cosl}{\operatorname{cosl}}
\newcommand{\co}{\operatorname{co}}
\newcommand{\CoolAlgo}{\textsc{CheatRec}} 
\newcommand{\MarginAlgo}{\textsc{mRec}} 
\newcommand{\ExpandHull}{\textsc{HullTrick}}
\author{Marco Bressan
\\
Dept.\ of CS, Univ.\ of Milan, Italy
\\
marco.bressan@unimi.it
\And
Nicolò Cesa-Bianchi
\\
DSRC \& Dept.\ of CS, Univ.\ of Milan, Italy
\\
nicolo.cesa-bianchi@unimi.it
\AND
Silvio Lattanzi
\\ 
Google
\\
silviol@google.com
\And
Andrea Paudice
\\
Dept.\ of CS, Univ.\ of Milan, Italy \& \\
Istituto Italiano di Tecnologia, Italy
\\
andrea.paudice@unimi.it
}
\begin{document}

\maketitle

\begin{abstract}
We study an active cluster recovery problem where, given a set of $n$ points and an oracle answering queries like ``are these two points in the same cluster?'', the task is to recover exactly all clusters using as few queries as possible. We begin by introducing a simple but general notion of margin between clusters that captures, as special cases, the margins used in previous works, the classic SVM margin, and standard notions of stability for center-based clusterings. Under our margin assumptions we design algorithms that, in a variety of settings, recover all clusters exactly using only $\scO(\log n)$ queries. For the Euclidean case, $\R^m$, we give an algorithm that recovers \emph{arbitrary} convex clusters, in polynomial time, and with a number of queries that is lower than the best existing algorithm by $\Theta(m^m)$ factors. For general pseudometric spaces, where clusters might not be convex or might not have any notion of shape, we give an algorithm that achieves the $\scO(\log n)$ query bound, and is provably near-optimal as a function of the packing number of the space. Finally, for clusterings realized by binary concept classes, we give a combinatorial characterization of recoverability with $\scO(\log n)$ queries, and we show that, for many concept classes in Euclidean spaces, this characterization is equivalent to our margin condition. Our results show a deep connection between cluster margins and active cluster recoverability.

\end{abstract}

\section{Introduction}
This work investigates the problem of exact cluster recovery using oracle queries, in the well-known framework introduced by~\citet{ashtiani2016clustering}. We are given a set $X$ of $n$ points from some domain $\scX$ (e.g., from the Euclidean $m$-dimensional space $\R^m$) and an oracle answering to same-cluster queries of the form ``are these two points in the same cluster?'' or, equivalently, to label queries of the form ``which cluster does this point belong to?''. The oracle answers are consistent with some clustering $\C=(C_1,\ldots,C_k)$ of $X$ unknown to the algorithm, where $k$ is a fixed constant. The goal is to design an algorithm that recovers $\C$ deterministically by using as few queries as possible.

Clearly, if there are no restrictions on $\C$, then any algorithm needs $n$ queries in the worst case. Thus, the question is what assumptions on $\C$ yield query-efficient algorithms. Since a good clustering is often thought of as having well-separated clusters, a natural assumption is that $\C$ satisfies some margin property; and previous work shows precisely that some margin properties yield cluster recovery algorithms that achieve the ``gold standard'' bound of $\scO(\log n)$ queries. The first such algorithm appeared in~\citet{ashtiani2016clustering} for the Euclidean case (i.e., when $X \subseteq \R^m$), with the following result. If every cluster $C_i$ is separated from $X \setminus C_i$ by a ball that is centered in the center of mass of $C_i$, then $\scO(\log n)$ queries are sufficient to recover $\C$ with high probability, provided that, for some fixed $\gamma > 0$, every ball does not include points of other clusters, even if expanded by a factor of $1+\gamma$. The parameter $\gamma$ is called \emph{margin}, and the number of queries needed to recover $\C$ grows with $\nicefrac{1}{\gamma}$.

In a first attempt at generalizing this result,~\citet{BCLP20} showed that with $\scO(\log n)$ queries one can actually recover clusters with \emph{ellipsoidal} separators, at the price of a dependence on $\gamma$ and $m$ of approximately $(\nicefrac{m}{\gamma})^{m}$. Interestingly, this result was achieved via boosting of one-sided error learning, which works as follows. Suppose that, by making $\scO(1)$ queries, we could identify correctly (with zero mistakes) a constant fraction of the points in some cluster $C_i$. Then, we could label those points as~$i$, remove them from the dataset, and repeat. It is not hard to show that, after $\scO(\log n)$ rounds, we have correctly labeled all the input points with high probability. The difficult task is, of course, using only $\scO(1)$ queries to identify correctly a constant fraction of some cluster $C_i$ (this is called learning with one-sided error, since all points predicted to be in $C_i$ are indeed in $C_i$, while points predicted \emph{not} to be in $C_i$ can be anywhere). The key insight of~\citet{BCLP20} is that, if the clusters have margin $\gamma$ with respect to their ellipsoidal separators, then roughly $(\nicefrac{m}{\gamma})^m$ queries are sufficient. This leads to the following question: how much can this margin-based approach be extended?

In this work we provide several answers, revealing how margin-based cluster recovery and one-sided error learning are intimately connected. Our main contributions are as follows.
\begin{enumerate}[itemsep=1pt,parsep=1pt,leftmargin=12pt]
    \item We introduce a new notion of margin in Euclidean spaces, that we call ``convex hull margin'' (Definition~\ref{def:ch_margin}). This is a strict generalization of the margins of~\citep{BCLP20,ashtiani2016clustering} and of the usual SVM margin, and allows the clusters to have \emph{any shape whatsoever} as long as they are convex. Under the convex hull margin, we develop a novel technique for learning with one-sided error that we call \emph{convex hull expansion trick}. It essentially amounts to sampling many points from a single cluster and ``inflate'' their convex hull by a factor of $(1+\gamma)$. This simple technique (whose proof, however, is quite technical) yields an algorithm for exact cluster recovery in $\R^m$ which runs in polynomial time and makes $\scO(\log n)$ queries (Theorem~\ref{thm:cool}). The dependence of the query bound on $\gamma$ and $m$ is of order $(1+\nicefrac{1}{\gamma})^m$, which is significantly better than~\citep{BCLP20} and closer to the query complexity lower bound of order $(1+\nicefrac{1}{\gamma})^{m/2}$.
    \item We introduce a notion of cluster margin for general pseudometric spaces called \emph{one-versus-all margin} (Definition~\ref{def:ova_margin}). This notion of margin is strictly more general than convex hull margin, and captures as special cases some standard notions of stability for hard clustering problems, such as $k$-means or $k$-centers. We show that, if a clustering has one-versus-all margin, then it can be recovered with $\scO(\log n)$ queries by an algorithm that uses a purely learning-based approach (Theorem~\ref{thm:ova_margin}). The $\scO(\log n)$ query bound hides a dependence on the complexity of the pseudometric space expressed in terms of its packing number. We show that such a dependence is essentially optimal, thus characterizing the recoverability of clusterings in this setting.
    \item Finally, we show a new connection between margin-based learning and exact active cluster recoverability, when clusters are realized by some concept class $\Hyp$ (that is, when for each cluster $C_i$ there is a concept $h_i \in \Hyp$ such that $X \cap h_i = C_i$). We show that if a certain combinatorial parameter of $\Hyp$, called \emph{coslicing dimension}, is bounded, then one can learn clusterings with $\scO(\log n)$ label queries; otherwise, $\Omega(n)$ queries are needed in the worst case (Theorem~\ref{thm:kclass_learning}). Moreover, we show that if $\Hyp$ is \emph{any} concept class in $\R^m$ that is closed under affine transformations and well-behaved in a natural sense, then the clusters realized by $\Hyp$ are recoverable with $\scO(\log n)$ label queries if and only if they have positive one-versus-all margin (Theorem~\ref{thm:affine}).
\end{enumerate}
Our contributions have implications for active learning of binary and multiclass classifiers, too. More precisely, our $\scO(\log n)$ query bounds imply $\widetilde{\scO}(\log \nicefrac{1}{\epsilon})$ query bounds for pool-based active learning~\citep{McCallum98}, where $\epsilon$ is the generalization error. To see this, draw a set $X$ of $\Theta\big(\epsilon^{-1}(K \log \nicefrac{1}{\epsilon} + \log \nicefrac{1}{\delta})\big)$ unlabeled samples from the underlying distribution, where $K$ is the relevant measure of capacity (the VC-dimension or the Natarajan dimension), run our algorithms over $X$, and compute a hypothesis consistent with the recovered labeling $\C$. These types of reductions are standard in active learning, see for instance~\citep{Kane17}.

\paragraph{Additional related work.}
Same-cluster queries are natural to implement in crowd-sourcing systems, and for this reason they have been extensively studied both in theory~\citep{ailon2018approximate, ailon2017approximate, gamlath2018semi, huleihel2019same, mazumdar2017semisupervised, mazumdar2017clustering, NIPS2017_7054, saha2019correlation, vitale2019flattening} and in practice~\citep{firmani2018robust, gruenheid2015fault, verroios2015entity, verroios2017waldo}. Note that same-cluster queries are essentially equivalent to label queries, the basic mechanism of active learning \citep{MAL-037}.

Various notions of margin are central in both active learning and cluster recovery \citep{xu2004maximum,Balcan07margin,Balcan13-logconcave,Kane17,BCLP21-density}. Our coslicing dimension is similar to the slicing dimension of \citet{Kivinen95} and the star number of \citet{hanneke2015minimax}. Our arguments based on packing numbers are similar to those based on the inference dimension of \citet{Kane17} or the lossless sample compression of \citep{Hopkins21boundedmem}, as we cannot infer the label of a point only when it is far from already-labeled points. Combinatorial characterizations of multiclass learning have been also proposed in the passive case by \citet{bendavid1995characterizations,rubinstein2009shifting,Daniely14multiclass}. Other learning settings related to one-sided and active learning are RPU learning \citep{rivest1988learning} and perfect selective classification \citep{el2012active} --- see \citep{hopkins2019power} for a discussion.

\section{Preliminaries and notation}
\label{sec:setup}
The input is a pair $(X,O)$, where $X$ is a set of $n$ points from some domain $\scX$, and $O$ is a label oracle that, when queried on any $x \in X$, returns the cluster id $\C(x)$ of $x$. The oracle $O$ is consistent with a \emph{latent clustering} $\C=(C_1,\ldots,C_k)$ of $X$, i.e., a $k$-uple of pairwise disjoint sets whose union is $X$.\footnote{In line with previous works, we assume $k$ is fixed and known.} Note that we allow clusters to be empty. Our goal is to recover $\C$ exactly, by making as few queries as possible to $O$. Queries can be made adaptively, that is, the $j$-th point to be queried can be chosen as a function of the answers to the first $j-1$ queries. We express the number of queries as a function of $k$, $n$, and other parameters to be introduced later. This setting is essentially equivalent to the semi-supervised active clustering (SSAC) framework of \citet{ashtiani2016clustering}, where the oracle answers same-cluster queries $\scq(x,y)$ that, for any two points $x,y \in X$, return $\textsc{true}$ iff $\C(x)=\C(y)$. We use label queries instead of \scq\ queries only for simplicity. Indeed, any \scq\ query can be emulated with two label queries; conversely, the label of any point can be learned with $k$ \scq\ queries, up to a relabeling of the clusters. This implies that our bounds on the number of queries for cluster recovery hold for an \scq\ oracle as well, up to a multiplicative factor of $k$. 

We often assume a metric or a pseudometric $d$ over $\scX$ (a pseudometric allows two distinct points to have distance $0$). For any $X \subset \scX$, we denote by $\diam_d(X) = \sup_{x,x' \in X} d(x,x')$ the diameter of $X$ measured by $d$, and we define $\diam_d(\emptyset)=0$. For any two sets $U,S \subset \scX$, we denote by $d(U,S)=\inf_{x \in U,y \in S}d(U,S)$ their distance according to $d$, and we define $d(U,\emptyset)=\infty$. For any $X \subset \R^m$, we write $\conv(X)$ for the convex hull of $X$. The unit Euclidean sphere in $\R^m$ is $S^{m-1}=\{x \in \R^m : \norm{2}{x}=1\}$. 
We recall some learning-theoretic facts. Let $\Hyp$ be an arbitrary collection of subsets of $\scX$ (i.e., a concept class). The intersection class of $\Hyp$ is $I(\Hyp)=\bigcup_{i \in \Nat} \{h_1 \cap \ldots \cap h_i : h_1,\ldots,h_i \in \Hyp\}$. Given any $S \subset \scX$ and any $S' \subseteq S$ realized by some $h^{\star} \in \Hyp$, the smallest concept in $I(\Hyp)$ consistent with $S'$ is defined as $h^{\circ} = \bigcap\{h \in \Hyp : h \cap S = S'\}$. Note that $h^{\circ} \subseteq h^{\star}$. Finally, we recall the definition of learning with one-sided error:
\begin{definition}[\citet{Kivinen95}, Definition 4.4]\label{def:oneside}
An algorithm $\scA$ learns $\Hyp$ with one-sided error $\epsilon$ and confidence $\delta$ with $r$ examples if, for any target concept $h^{\star} \in \Hyp$ and any probability measure $\scP$ over $\scX$, by drawing $r$ independent labeled examples from $\scP$, the algorithm outputs a concept $h \subseteq h^{\star}$ such that $\scP(h^{\star} \setminus h) \le \epsilon$ with probability at least $1-\delta$.
\end{definition}

\section{Margin-based exact recovery of clusters in Euclidean spaces}
\label{sec:euclidean}
In this section, we consider the Euclidean setting $\scX=\R^m$. We show that the ellipsoidal margin assumption of~\citet{BCLP20} can be significantly generalized, while retaining the $\scO(\log n)$ query complexity, by introducing what we call the \emph{convex hull margin}. In a nutshell the convex hull margin says that, given any cluster $C$, any point not in $C$ is separated by the convex hull of $C$ by a distance at least $\gamma$ times the diameter of $C$. Instead of using the Euclidean metric, however, we allow distances to be measured by \emph{any} pseudometric over $\R^m$, which we do not need to know, and which may even differ from cluster to cluster. The only requirement is that the pseudometric be homogeneous and invariant under translation (i.e., induced by a seminorm). 
\begin{definition}[Convex hull margin]\label{def:ch_margin}
Let $D$ be the family of all pseudometrics induced by the seminorms over $\R^m$, and let $X \subset \R^m$ be a finite set. A clustering $\C=(C_1,\ldots,C_k)$ of $X$ has convex hull margin $\gamma$ if for every $i \in [k]$ there exists $d_i \in D$ such that:
\begin{align}
	d_i\big(X \setminus C_i, \conv(C_i)\big) > \gamma\, \diam_{d_i}(C_i)
\end{align} 
\end{definition}
This new definition has a few interesting properties. First, it strictly generalizes the ellipsoidal margin of~\citet{BCLP20} and the spherical margin of~\citet{ashtiani2016clustering}. To see this, let $D$ be the class of all pseudometrics over $\R^m$ that can be written as $d_{W}(x,y)=\dotp{x-y, W(x-y)}$ for some positive semidefinite matrix $W \in \R^{m \times m}$ (for the spherical margin, take $W=rI$ where $I$ is the identity matrix). Second, it strictly generalizes the classic SVM margin, which prescribes a distance of $\gamma \, \diam(X)$ between the clusters where $\diam(X)$ is the Euclidean diameter of $X$. Indeed, since every cluster has Euclidean diameter at most $\diam(X)$, a SVM margin of $\gamma$ implies a convex hull margin of $\gamma$. On the other hand, there are cases with arbitrarily small SVM margin but arbitrarily large convex hull margin, as we see in Section~\ref{sec:margin}. Third, the use of pseudometrics allows us to capture the Euclidean distance between the points after projection on a subspace $V \subset \R^m$, modelling scenarios where each cluster only ``cares'' about a certain subset of the features. 

Under the convex hull margin, we give a polynomial-time algorithm, named \CoolAlgo\ (for Convex Hull ExpAnsion Trick Recovery) that recovers $\C$ using $\scO(\log n)$ queries.
\begin{restatable}{retheorem}{cooltheorem}
\label{thm:cool}
Let $(X,O)$ be an instance whose latent clustering $\C$ has convex hull margin $\gamma > 0$. Then \CoolAlgo$(X,O,\gamma)$ deterministically outputs $\C$, runs in time $\poly(k,n,m)$, and with high probability makes a number of queries to $O$ bounded by $\scO\!\left(k^2 m^5 \left(1+\nicefrac{1}{\gamma}\right)^{m} \log\!\left(1+\nicefrac{1}{\gamma}\right) \log n\right)$.
\end{restatable}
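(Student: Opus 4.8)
The plan is to recover $\C$ by boosting a \emph{one-sided} learner, exactly in the scheme sketched in the introduction: use a bounded number of queries to pin down, \emph{with zero mistakes}, a constant fraction of some cluster, then label and delete those points and repeat. Everything nontrivial sits in the one-sided learner, which realizes the convex hull expansion trick. Fix a cluster index $i$ and suppose we are handed a set $S=\{x_1,\dots,x_{r'}\}$ drawn i.i.d.\ uniformly from the still-unlabeled part of $C_i$ (all of $S$ carries label $i$). Pick a ``central'' point $c\in\conv(S)$ --- the centroid of $S$, or the center of its John ellipsoid, both computable in polynomial time --- and output the inflated hull $R = c+(1+\gamma)\big(\conv(S)-c\big)$. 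The test $x\in R$ amounts to $c+\tfrac{1}{1+\gamma}(x-c)\in\conv(S)$, a linear-programming feasibility check, so $R\cap X$ is computed in time $\poly(n,m,r')$; we label all of $R\cap X$ by $i$ and recurse on the rest. \emph{Safety} --- that $R\cap X\subseteq C_i$, i.e.\ that we never mislabel --- is the easy half: since $S\subseteq C_i$ we have $\conv(S)\subseteq\conv(C_i)$; since for every seminorm the diameter of a set equals that of its convex hull, $\diam_{d_i}(\conv(C_i))=\diam_{d_i}(C_i)$; and for $x\in R$, writing $x=c+(1+\gamma)(w-c)$ with $w\in\conv(S)$ and using $c,w\in\conv(C_i)$, we get $x-w=\gamma(w-c)$, hence $d_i(x,\conv(C_i))\le\norm{d_i}{x-w}=\gamma\norm{d_i}{w-c}\le\gamma\,\diam_{d_i}(C_i)$, which by Definition~\ref{def:ch_margin} is strictly below $d_i(z,\conv(C_i))$ for every $z\in X\setminus C_i$, so $x\notin X\setminus C_i$. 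Crucially this is oblivious to the unknown, per-cluster seminorm $d_i$: the inflation is a purely affine operation, and the identity $\diam_{d_i}(\conv(C_i))=\diam_{d_i}(C_i)$ holds for all seminorms at once, so a single $R$ is safe no matter which $d_i$ witnesses the margin.

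\textbf{The hard half (coverage), which I expect to be the main obstacle.} I must show that $r'=\scO\big(m^5(1+\nicefrac{1}{\gamma})^{m}\log(1+\nicefrac{1}{\gamma})\big)$ samples are enough so that, with constant probability, $R$ captures at least half of $C_i$ --- equivalently, at most half the $x\in C_i$ satisfy $c+\tfrac{1}{1+\gamma}(x-c)\notin\conv(S)$. Passing to the affine hull of $C_i$ I may assume $d_i$ is a genuine norm there and $\conv(C_i)$ is full-dimensional, and after rescaling that $\diam_{d_i}(C_i)=1$; then the statement is that the $\tfrac{1}{1+\gamma}$-contraction of the hull toward $c$, namely $\tfrac{1}{1+\gamma}\conv(C_i)+\tfrac{\gamma}{1+\gamma}c$, is contained in $\conv(S)$ up to an $\epsilon$-fraction of the cluster's mass. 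The delicate point is that the convex hull of a sample does \emph{not} in general capture most of the sampled distribution (convex sets have unbounded VC dimension), so the argument must exploit that we only need the \emph{contracted} body --- which lies at $d_i$-distance $\Omega(\gamma)$ from $\partial\conv(C_i)$ and is therefore shielded from the large sampling fluctuations near the boundary. My plan for the lemma: (i) cover $\conv(C_i)$ by $N=(1+\scO(1)/\gamma)^m$ balls of $d_i$-radius $\Theta(\gamma)$ --- a cover that exists for the unknown $d_i$ and need never be built; this is the origin of the $(1+\nicefrac{1}{\gamma})^m$ factor; (ii) show that once $r'\gtrsim N\log N$, with constant probability $S$ meets every ball of $\mu$-mass $\ge\epsilon/N$, and those balls carry $\ge 1-\epsilon$ of the mass (coupon collector, the $N\log N$); and (iii) use relative VC-type uniform convergence for halfspaces (VC dimension $m+1$), plus a net over the direction sphere to handle not knowing $d_i$, to argue that meeting every heavy ball forces the contracted body into $\conv(S)$ except on the leftover $\epsilon$-fraction. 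The polynomial-in-$m$ losses from steps (ii)--(iii) aggregate to the stated $m^5$, and I expect step (iii) --- making ``shielded from boundary fluctuations'' quantitative and uniform over all admissible seminorms --- to be the genuinely technical obstacle; low-dimensional degeneracies are absorbed by the passage to affine hulls.

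\textbf{Boosting to exact recovery.} \CoolAlgo\ maintains the unlabeled set $U$ (initially $X$). Each round it draws $\scO(k\,r')$ points i.i.d.\ uniformly from $U$ and queries their labels, then picks the most frequent label $i^\star$; by pigeonhole its sample has size $\ge r'$, and (with a large enough sampling constant, whp) $|C_{i^\star}\cap U|\ge|U|/(2k)$. Conditioned on the label, those points are i.i.d.\ uniform on $C_{i^\star}\cap U$, so the one-sided learner runs with $\epsilon=\delta=\tfrac14$: with constant probability it deletes at least $\tfrac12|C_{i^\star}\cap U|\ge|U|/(4k)$ points and --- crucially --- never a point of another cluster. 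Hence $|U|$ contracts by a constant factor every $\scO(k)$ rounds with constant probability, so a standard calculation (expected geometric decay, then Markov) gives $U=\emptyset$ after $\scO(k\log n)$ rounds with high probability. The returned labeling is correct because every individual deletion is safe, and the whole run is $\poly(k,n,m)$; the query cost is $\scO(k\,r')$ per round times $\scO(k\log n)$ rounds, i.e.\ $\scO\big(k^2 m^5 (1+\nicefrac{1}{\gamma})^m\log(1+\nicefrac{1}{\gamma})\log n\big)$, as claimed.
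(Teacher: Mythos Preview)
Your boosting scheme and the safety (one-sided error) half are correct and match the paper exactly. The gap is entirely in the coverage half, and it is a genuine gap, not just a sketch that could be filled in along the lines you propose.

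First, the center. You take $c$ to be the centroid of the sample points $S$ (or the John center). The paper takes $z$ to be an approximation of the centroid of the \emph{body} $K=\conv(S)$, computed by hit-and-run sampling from $K$. These are different objects: the points of $C_i$ can be distributed arbitrarily inside their hull, so the sample centroid can be arbitrarily far (in the isotropic metric of $K$) from the centroid of $K$. The paper's coverage proof needs $z$ close to the centroid of $K$ because it invokes a Gr\"unbaum-type inequality (every halfspace through $z$ cuts off at least a $\frac{1}{e}-d_K(\mu_K,z)$ fraction of $\vol(K)$), and that inequality is what drives everything downstream. Neither the point-centroid nor the John center comes with such a guarantee.

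Second, and more importantly, your plan for coverage via a $\gamma$-ball cover, coupon collector, and halfspace VC bounds does not lead anywhere concrete. Hitting every heavy $d_i$-ball with a sample point gives you a Hausdorff-type closeness of $\conv(S)$ to $\conv(C_i)$, but Hausdorff closeness does \emph{not} imply that a $\frac{1}{1+\gamma}$-contraction of $\conv(C_i)$ sits inside $\conv(S)$; a long thin sliver of $\conv(C_i)$ can be missed entirely while Hausdorff distance stays small. Your step (iii) --- ``halfspace VC plus a net over directions forces the contracted body into $\conv(S)$'' --- is the whole lemma, and you have not said what the actual implication is. (Also, passing to the affine hull of $C_i$ does not make $d_i$ a norm there: a seminorm can still vanish on directions inside that hull.)

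The paper's route is quite different. It never reasons about $d_i$ at all in the coverage step. Instead it shows, purely in Euclidean terms, that between $K$ and its $(1+\gamma)$-dilation about the centroid there always exists a polytope $P$ on $t=\scO\big(m^2(1+\nicefrac{1}{\gamma})^m\big)$ vertices (an adaptation of a theorem of Nasz\'odi, using the Gr\"unbaum inequality above). Since $K\subseteq P$, the polytope $P$ is consistent with the labeled sample $S_C$; since $t$-vertex polytopes have VC dimension $\scO(m^2 t\log t)$ (Kupavskii), a sample of size $s=\Theta\big(m^5(1+\nicefrac{1}{\gamma})^m\log(1+\nicefrac{1}{\gamma})\big)$ suffices for standard PAC bounds to give $|P\cap C|\ge \frac12|C|$, hence $|Q\cap C|\ge \frac12|C|$. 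The $(1+\nicefrac{1}{\gamma})^m$ you attribute to a covering number is here the vertex count of the Nasz\'odi polytope; the $m^5$ comes from the polytope VC bound $m^2 t\log t$, not from direction nets. The approximate centroid is obtained by $\Theta(m^2)$ hit-and-run samples after putting $K$ in near-isotropic position via its John ellipsoid, which is also what makes the whole pipeline polynomial time.
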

To put this result in perspective, consider the algorithm of~\citet{BCLP20}. Under an ellipsoidal margin of $\gamma_{EL}$, that algorithm achieves a query bound of roughly $\big(\frac{m}{\gamma_{EL}})^m \log n$. One can check that their ellipsoidal margin of $\gamma_{EL}$ implies\footnote{Their definition of margin uses squared distances, so if a cluster has margin $\gamma_{EL}$ by their definition, then it has convex hull margin $\sqrt{1+\gamma_{EL}}-1$. This is why we need to make this apparently misplaced observation.} convex hull margin $\gamma \ge \frac{\gamma_{EL}}{3}$ for all $\gamma_{EL} \le 1$. Hence, in this range, our dependence on $\gamma$ is better by $\Theta(m^m)$ factors.

As we said, \CoolAlgo\ is based on boosting learners with one-sided error. What makes \CoolAlgo\ different, however, is the technique used to learn with one-sided error. To explain this, let us recall how the algorithm of~\cite{BCLP20} works. Their idea is to learn an approximate ellipsoidal separator that contains a constant fraction of some cluster $C_i$, and then remove from it all points that do not belong to $C_i$. Crucially, to learn the approximate separator, the algorithm needs a number of queries proportional to the VC-dimension of the corresponding class, which for $m$-dimensional ellipsoids is $\scO(m^2)$. Unfortunately, this approach does not work with the convex hull margin, since the class of allowed separators has unbounded VC-dimension (it is the class of all polytopes in $\R^m$).

To bypass this obstacle we develop a novel technique for learning with one-sided error that we call \emph{convex hull expansion trick}. This technique essentially amounts to the following procedure: draw a large labeled sample from $X$, take all the sampled points that belong to the same cluster $C$, and inflate their convex hull by a factor $1+\gamma$. This can be seen as a way to exploit the convex hull margin directly, without going through the VC-dimension of the separators. Such a trick may appear natural, but proving it to work is not trivial and requires a combination of results from probability, convex geometry, and PAC learning. In the rest of the section, we describe the trick and sketch the proof of Theorem~\ref{thm:cool}. For the complete proof, see the appendix.

\subsection{\CoolAlgo\ and the convex hull expansion trick}
The starting point of \CoolAlgo\ is the following idea. Let $s$ be a parameter to be set later. In each round, \CoolAlgo\ draws from $X$ a uniform random sample $S$ of size $|S|=ks$. Then, using the oracle, it determines the partition $S_1,\ldots,S_k$ induced by $\C$. As there are at most $k$ clusters, for at least one of them (say cluster $C$) we have $|S_C| \ge s$, a fact that allow us to use PAC bounds later on. Now, let $K=\conv(S_C)$, and let $d \in D$ be the pseudometric that gives the margin of $C$. Since $d$ is homogeneous and invariant under translation (recall that $D$ contains pseudometrics induced by seminorms), we know that any point $y$ such that $d(y,K) \le \gamma \diam_d(K)$ must belong to $C$ as well. Therefore, if we pick any point $z \in K$ and compute the scaling $Q=(1+\gamma)K$ with respect to $z$, $Q$ will not intersect clusters other than $C$. Hence, we can safely label all points in $X \cap Q$ as belonging to $C$. As long as we stick with this, we will not make mistakes --- in other words, we learn with one-sided error. However, this is not sufficient: in order to make actual progress, we must guarantee that $X \cap Q$ contains a good fraction of $C$. It is not obvious why this should be true: in the worst case, $X \cap Q$ could simply coincide with $S_C$ and we would have learned nothing.

It is here that the convex hull expansion trick enters the game. The key idea is to choose $z=\mu_K$, the center of mass of $K$. Since however computing $\mu_K$ is hard~\citep{Rademacher07Centroid}, we use an approximation. We give here the technical statement and a sketch of the proof (full proof in the appendix). The uniform probability measure $\scU$ over $K$ is defined by $\scU(K')=\frac{\vol(K')}{\vol(K)}$ for all measurable $K' \subseteq K$. A probability measure $\scP$ is $\epsilon$-uniform if $|\scP(K')-\scU(K')|\le \epsilon$ for all measurable $K' \subseteq K$.
\begin{restatable}[Convex hull expansion trick]{relemma}{tricklemma}
\label{lem:expanded_hull}
Fix $\gamma > 0$, and let $s = \Theta\big( m^5 \big(1+\nicefrac{1}{\gamma}\big)^{m} \log \big(1+ \nicefrac{1}{\gamma}\big)\big)$ large enough. Let $S_C$ be a sample of $s$ independent uniform random points from some cluster $C$, and let $K=\conv(S_C)$. Let $X_1,\ldots,X_N$ be independent random points sampled $\epsilon$-uniformly from $K$, with $\epsilon \in \Theta(m^{-1})$ small enough and $N \in \Theta(m^2)$ large enough, and let $z = \frac{1}{N}\sum_{i=1}^N X_i$. Finally, let $Q=(1+\gamma)K$ where the center of the scaling is $z$. Then, with probability arbitrarily close to $1$, we have $|Q \cap C| \ge \frac{1}{2}|C|$.
\end{restatable}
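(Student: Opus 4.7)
The argument splits naturally into two parts: (i) concentration of the empirical centroid $z$ around the true centroid $\mu_K$ of $K$, and (ii) the main geometric-probabilistic step showing that $(1+\gamma)K$ expanded around $\mu_K$ captures at least half of $C$ with probability over $S_C$ approaching $1$.

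For (i) I would argue as follows. Since each $X_i$ is $\epsilon$-uniform on $K$ with $\epsilon = \Theta(1/m)$, its expectation differs from $\mu_K$ by at most $O(\epsilon \diam(K)) = O(\diam(K)/m)$ in Euclidean norm. A coordinatewise Hoeffding bound on $N = \Theta(m^2)$ samples, together with a union bound over the $m$ coordinates, gives $\|z - \E[z]\|_2 = O(\diam(K)/\sqrt{m})$ with probability arbitrarily close to $1$. Hence $\|z - \mu_K\|_2 = O(\diam(K)/\sqrt{m})$. The two scalings of $K$ --- one around $z$ and one around $\mu_K$, both with factor $(1+\gamma)$ --- differ only by a translation of magnitude $\gamma\|z - \mu_K\|_2 = O(\gamma\diam(K)/\sqrt{m})$, which is a $1/\sqrt{m}$ fraction of the expansion radius $\gamma\diam(K)$. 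This shift can be absorbed by slightly decreasing $\gamma$, so I would reduce without loss of generality to the case $z = \mu_K$.

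For (ii), the heart of the argument, note that (with $z = \mu_K$) the condition $c \in Q$ is equivalent to $\mu_K + (c - \mu_K)/(1+\gamma) \in K$. I would combine a one-sided PAC bound for the class of halfspaces in $\R^m$ (VC-dimension $m+1$) with a covering argument over $S^{m-1}$. By the PAC bound, with $s$ as in the statement, with probability close to $1$ every halfspace $H$ disjoint from $S_C$ satisfies $|C \cap H|/|C| \le \epsilon'$ for an appropriately small $\epsilon'$. If $c \notin Q$, then the shrunken point $\mu_K + (c-\mu_K)/(1+\gamma)$ lies outside $K$ and is witnessed by some halfspace disjoint from $S_C$; a short geometric observation (using that $\mu_K \in K$ and that $\mu_K$, the shrunken point, and $c$ are collinear in this order) forces $c$ itself to lie in the same witness halfspace. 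I would then index witness halfspaces by a $(\gamma/4)$-net $U \subseteq S^{m-1}$ of size $O((1/\gamma)^{m-1})$, classifying each direction as heavy or light depending on the cardinality of the corresponding outer slab of $C$. For heavy directions, a union bound over $U$ shows $S_C$ hits each outer slab (so no bad $c$ arises in them); for light directions, the total contribution to $|C \setminus Q|$ is at most $|U| \cdot |C|/(2|U|) = |C|/2$.

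The main obstacle is making the geometric step in (ii) precise: one must carefully link the PAC bound on halfspaces with the centroid-based scaling, while handling the fact that a single halfspace may witness many bad $c$'s and that witness halfspaces vary with $c$. The $(\gamma/4)$-net over $S^{m-1}$ provides the combinatorial handle for this counting, and the polynomial $m^5 \log(1/\gamma)$ factor in $s$ absorbs the costs of the union bound over $|U|= \Theta((1/\gamma)^{m-1})$, the $O(\sqrt{m})$ slack in centroid concentration, and the log-dependence of the one-sided PAC bound.
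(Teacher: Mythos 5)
Your two-step outline (concentrate $z$ near $\mu_K$, then a geometric PAC argument) matches the paper's structure, but both steps contain gaps. In step (i), coordinatewise Hoeffding with a union bound over $m$ coordinates gives only $\norm{2}{z-\mu_K} = O(\sqrt{m\log m})$ in the isotropic frame (where each coordinate of $X_i$ ranges over an interval of length up to $2(m+1)$), whereas the argument needs a \emph{constant}. The paper gets a constant by exploiting orthogonality: the $Y_i = X_i - \E X_i$ are zero-mean and uncorrelated, so $\E\norm{2}{\bar Y}^2 \le 4(m+1)^2/N = O(1)$ for $N=\Theta(m^2)$, and Markov's inequality yields $d_K(z,\mu_K) \le 1/e - 1/3$. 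Your ``absorb into $\gamma$'' reduction also breaks for anisotropic $K$: scaling about $z$ versus about $\mu_K$ differ by a translation of magnitude $\gamma\norm{2}{z-\mu_K}$, which for an elongated $K$ can exceed $\gamma$ times the width of $K$ in its short directions, so shrinking $\gamma$ slightly does not restore containment. The isotropic distance $d_K$ is used in the paper precisely to make the displacement bound scale-free.

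In step (ii), your plan (halfspace PAC bound, VC dimension $m+1$, plus a $(\gamma/4)$-net of $S^{m-1}$ with a heavy/light split) is a genuinely different route from the paper's, which shows there exists a polytope $P$ on $t = O\big(m^2/(\vartheta(1-\vartheta)^m)\big)$ vertices with $K \subseteq P \subseteq Q$ (via a random-polytope argument of~\citet{Naszodi2018} combined with the Grunbaum-type volume bound of~\citet{BV04-ConvexRW}), and then applies the few-vertex-polytope VC bound $O(m^2 t\log t)$ of~\citet{Kupavskii2004-polyVCdim}. However, your fixed net over $S^{m-1}$ runs into the same anisotropy obstruction: perturbing a normal direction by $\gamma/4$ can move a supporting hyperplane of $K$ by $\Theta(\gamma R(K))$, whereas the $K$-to-$Q$ gap in a short direction is only $\gamma$ times the width there, and in the isotropic frame the ratio of $R(K)$ to the minimal width can be $\Theta(m)$. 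A net fine enough to compensate would be at scale $\gamma/(4m)$, inflating $|U|$ to $(m/\gamma)^{m-1}$ and pushing $s$ to $(m/\gamma)^m$ --- the very dependence this lemma is designed to improve upon. The affine-invariant volume bound is precisely what lets the paper sidestep the shape of $K$; to make your net-based argument work you would need a system of witness halfspaces adapted to $K$'s geometry rather than to a uniform sphere, which is in effect what the polytope $P$ supplies.
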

\begin{proof}[Sketch of the proof.]
To begin, suppose that $Q$ was obtained by scaling $K$ about its own center of mass $\mu_K$. Then, by a probabilistic argument, a result of~\citet{Naszodi2018} implies the existence of a polytope $P$ on roughly $(1+\nicefrac{1}{\gamma})^m$ vertices such that $K \subseteq P \subseteq Q$. Thus, if we can show that $X \cap P$ contains a good fraction of $C$, then $X \cap Q$ will contain a good fraction of $C$, too. To ensure that $X \cap P$ contains a good fraction of $C$, we observe that the class $\scP_t$ of all polytopes on at most $t$ vertices has VC-dimension at most $8 m^2 t \log_2 t$, by a recent result~\citep{Kupavskii2004-polyVCdim}. Hence, if we let $|S_C| = s \simeq (1+\nicefrac{1}{\gamma})^m $, then by standard PAC bounds $X \cap P$ contains half of $C$ with probability arbitrarily close to $1$. This holds because, as $K \subseteq P \subseteq Q$, then $P$ is consistent with $S_C$.

Now suppose that, in place of $\mu_K$, we use $z = \frac{1}{N}\sum_{i=1}^N X_i$. If $K$ is in isotropic position, then its radius is bounded by $m$, and therefore $\norm{2}{X_i} \le m$. This implies that, if $N=\Theta(m^2)$ and each $X_i$ comes from a distribution that is $\Theta(1/m)$-uniform over $K$, then with high probability $z$ is at distance $\eta=O(1)$ from $\mu_K$. In particular, by increasing $N$ by constant factors, we can make  arbitrarily small the probability that $\eta \le \nicefrac{1}{3}$. Then, by adapting the result of~\citet{Naszodi2018} with an extension of Grunbaum's inequality for convex bodies due to~\citet{BV04-ConvexRW}, we can show that $|Q \cap C| \ge \frac{1}{2}|C|$ with probability arbitrarily large.
\end{proof}

We conclude with a note on how to implement \CoolAlgo\ in polynomial time. The central point is to avoid computing $K=\conv(S_C)$ explicitly as an intersection of halfspaces, since this could take time $|S_C|^{\Theta(m)}$. Thus, we proceed as follows. First, we transform $S_C$ so that $\conv(S_C)$ is in what is called a near-isotropic position. To this end, we compute John's ellipsoid for $S_C$, and apply the map that turns that ellipsoid into a ball, which takes polynomial time. Afterwards, we can draw $X_1,\ldots,X_N$ efficiently via the ``hit-and-run from a corner'' algorithm of~\citet{lovasz2006hit}, in time $\scO\big(\poly(m,|S_C|)\log\frac{m}{\epsilon}\big)$ per sample, including the time to solve a linear program to determine when the walk hits the boundary of $K$. Once we have $z$, we rescale $S_C$ about $z$ itself, and label all points in $(1+\gamma)\conv(S_C)$ as belonging to $C$. This amounts to solving for every $x \in X$ a feasibility problem, which takes polynomial time. Finally, observe that the polytope $P$ appearing in the proof is only for the analysis, and is never computed. For a complete discussion, see the full proof.

\section{A more abstract view: the one-versus-all margin}
\label{sec:margin}
In this section, we consider the case where $\scX$ is a generic space equipped with a set of pseudometrics. Like in Section~\ref{sec:euclidean}, we want to formulate a notion of margin between clusters. However, now we cannot express the margin in terms of the diameter of convex hulls (since $\scX$ need not be a vector space and may have no notion of convexity at all), and we cannot apply any ``expansion trick'' (since the pseudometrics may not be homogeneous and invariant under translation). Instead, we adopt what we call the \emph{one-versus-all margin}. We prove that clusterings with positive one-versus-all margin can be recovered again with $\scO(\log n)$ oracle queries. However, we do not provide running time bounds like those of Section~\ref{sec:euclidean}: our algorithm is essentially based on computing an ERM, which in general may take time superpolynomial in $n$.

Next, we introduce the one-versus-all margin. 
\begin{definition}[One-versus-all margin]\label{def:ova_margin}
Choose $k$ pseudometrics $d_1,\ldots,d_k$ over $\scX$. A clustering $\C=(C_1,\ldots,C_k)$ of a finite set $X \subset \scX$ has one-versus-all margin $\gamma$ with respect to $d_1,\ldots,d_k$ if for all $i \in [k]$ we have:
\begin{align}
	d_i(X \setminus C_i, C_i) > \gamma\, \diam_{d_i}(C_i)
\end{align} 
\end{definition}
\textbf{Remark.} Unlike the convex hull margin (Definition~\ref{def:ch_margin}), here the pseudometrics $d_1,\ldots,d_k$ are fixed in advance. This is not a weakness of the above definition, but rather a strength of the convex hull margin, which allows one to recover clusters without even knowing $d_1,\ldots,d_k$.

Note also that, just like the convex hull margin, the one-versus-all margin in $\R^m$ is strictly more general than the SVM margin. The fact that SVM margin implies one-versus-all margin follows by the observation in Section~\ref{sec:euclidean}. The fact that one-versus-all margin does not imply SVM margin is shown in Lemma~\ref{lem:svm} below.
\begin{restatable}{relemma}{svmlemma}
\label{lem:svm}
For any $u \in \R^2$ let $d_u(x,y)=|\!\dotp{u,x-y}\!|$. For any $\eta > 0$ there exists a clustering $\C=(C_1,C_2)$ on a set $X \subset \R^2$ that has arbitrarily large one-versus-all margin with respect to $d_{(0,1)},d_{(1,0)}$, and yet $d_u(C_1,C_2) \le \eta\, \diam_{d_u}(X)$ for any $u \in \R^2 \setminus \orig$.
\end{restatable}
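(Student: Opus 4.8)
The plan is to place two ``axis-aligned'' clusters that are degenerate along complementary coordinates: $C_1$ a horizontal string of points (constant second coordinate) and $C_2$ a vertical string (constant first coordinate), both stretched out to a large scale $L$. Then $d_{(0,1)}$ sees zero diameter inside $C_1$ and $d_{(1,0)}$ sees zero diameter inside $C_2$, so the one-versus-all margin is unbounded, while the two strings are so long that no single linear functional can pull them apart relative to the spread of $X$. Concretely, fix an integer $L$ with $L \ge 2$ and $L - 1 \ge 2/\eta$, and put
\[
	C_1 = \big\{(j,0) : 1 \le j \le L\big\}, \qquad C_2 = \big\{(0,j) : 1 \le j \le L\big\}, \qquad X = C_1 \cup C_2 .
\]
These sets are disjoint, nonempty, and finite, so $(C_1,C_2)$ is a legitimate clustering of $X \subset \R^2$; take $d_1 = d_{(0,1)}$ and $d_2 = d_{(1,0)}$.

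First I would verify the one-versus-all margin. Every point of $C_1$ has the same second coordinate, so $\diam_{d_{(0,1)}}(C_1) = 0$, and symmetrically $\diam_{d_{(1,0)}}(C_2) = 0$. Meanwhile $d_{(0,1)}(X \setminus C_1, C_1) = d_{(0,1)}(C_2, C_1) = \min_{1 \le j \le L}|j| = 1 > 0$, and likewise $d_{(1,0)}(X\setminus C_2, C_2) = 1 > 0$. Hence both inequalities of Definition~\ref{def:ova_margin} read $1 > \gamma \cdot 0$, which holds for every $\gamma > 0$: the clustering has one-versus-all margin $\gamma$ for arbitrarily large $\gamma$. (If strictly positive cluster diameters are desired, one can jitter the points of $C_1$ inside a vertical window of width $\delta$ and those of $C_2$ inside a horizontal window of width $\delta$; this keeps the relevant inter-cluster distances $\ge 1-\delta$ and makes the diameters $\le \delta$, so the margin is $\ge (1-\delta)/\delta \to \infty$, and the estimate below changes only in lower-order terms.)

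The substantive step is the second claim: $d_u(C_1,C_2) \le \eta\,\diam_{d_u}(X)$ for every $u=(a,b)\in\R^2\setminus\{\orig\}$. Since $d_{tu}(x,y) = |t|\,d_u(x,y)$ for all scalars $t$, the ratio $d_u(C_1,C_2)/\diam_{d_u}(X)$ depends only on the direction of $u$, so it suffices to bound numerator and denominator in terms of $\max(|a|,|b|)$. For the numerator, testing the pair $(1,0)\in C_1$ and $(0,1)\in C_2$ gives $d_u(C_1,C_2) \le |\dotp{u,(1,-1)}| = |a-b| \le 2\max(|a|,|b|)$. For the denominator, $X$ contains $(1,0)$ and $(L,0)$, whose $u$-projections are $a$ and $aL$, and $(0,1)$ and $(0,L)$, whose $u$-projections are $b$ and $bL$; hence $\diam_{d_u}(X) \ge \max\!\big(|a|(L-1),\,|b|(L-1)\big) = (L-1)\max(|a|,|b|)$. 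As $u \ne \orig$ forces $\max(|a|,|b|) > 0$, dividing yields $d_u(C_1,C_2)/\diam_{d_u}(X) \le 2/(L-1) \le \eta$, which is the claim.

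I expect the only delicate point to be exactly this uniformity over all directions $u$ — one might worry about directions of irrational slope, where $d_u(C_1,C_2) = \min_{i,j}|ai-bj|$ resists a good absolute bound. The construction is designed to sidestep this: we never need $d_u(C_1,C_2)$ small in absolute terms, only small relative to $\diam_{d_u}(X)$, and stretching both clusters to scale $L$ makes $\diam_{d_u}(X)$ grow proportionally to $L$ in every direction while a single fixed pair of points caps the numerator. Picking $L$ just above $1+2/\eta$ then closes the argument.
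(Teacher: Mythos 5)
Your proof is correct and takes essentially the same approach as the paper: both place one cluster along each coordinate axis so that the respective projection pseudometrics see zero intra-cluster diameter (making the margin condition vacuous for every $\gamma$), and both bound the ratio $d_u(C_1,C_2)/\diam_{d_u}(X)$ uniformly in $u$ by playing a single near-origin pair against the overall spread of the configuration. The paper's four-point version shrinks the near-origin pair to scale $\eta$ while keeping the outer points at a fixed scale $a$, whereas you fix the inner pair and stretch the outer points to scale $L$ --- up to a global rescaling these are the same instance, and your explicit estimate $2/(L-1)\le\eta$ is if anything cleaner than the paper's appeal to a ball of radius $\Omega(1)$ inside $\conv(X)$.
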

\begin{minipage}{.55\textwidth}
\begin{proof}[Proof sketch.]
Consider Figure~\ref{fig:SVMmargin}. The two points along the x axis belong to $C_1$, and the two points along the y axis belong to $C_2$. The pseudometric $d_1=d_{(0,1)}$ measures the distance along the y axis, and the pseudometric $d_2=d_{(1,0)}$ measures the distance along the x axis. It is easy to see that $d_1(C_1,C_2)>0=\diam_{d_1}(C_1)$, and that $d_2(C_1,C_2)>0=\diam_{d_2}(C_2)$. Hence, the one-versus-all margin of $\C$ with respect to $d_1,d_2$ is unbounded. Yet, for any $\eta > 0$ we can make $d_u(C_1,C_2) \le \eta\, \diam_{d_u}(X)$ for any nonzero vector $u \in \R^2$, by placing the endpoints of the clusters arbitrarily near the origin.
\end{proof}
\end{minipage}
\begin{minipage}{.04\textwidth}
\hfill
\end{minipage}
\begin{minipage}{.4\textwidth}
\centering
\begin{tikzpicture}[scale=1.2,every node/.style={circle,inner sep=0pt,minimum size=4pt,draw,fill}]
\def\etax{.2}
\def\etay{.16}
\draw[->,opacity=.7] (-.2,0) -- (13*\etax,0);
\draw[->,opacity=.7] (0,-.2) -- (0,12*\etay);
\node (p1) at (\etax,0) {};
\node (q1) at (10*\etax,0) {};
\node[fill=white] (p2) at (0,\etay) {};
\node[fill=white] (q2) at (0,10*\etay) {};
\end{tikzpicture}

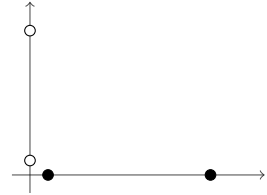
\captionof{figure}{An instance with arbitrarily small SVM margin but unbounded one-versus-all margin.}
\label{fig:SVMmargin}
\end{minipage}

\subsection{The one-versus-all-margin captures the stability of center-based clusterings}
Fix any pseudometric $d$ over $\scX$. A clustering $\C=(C_1,\ldots,C_k)$ of $X$ is \emph{center-based} if there exist $k$ points $c_1,\ldots,c_k \in \scX$, called \emph{centers}, such that for every $i \in [k]$ and every $x \in C_i$ we have $d(x,c_j) > d(x,c_i)$ for all $j \ne i$. In other terms, every point is assigned to the nearest center. It is well known that many popular center-based clustering problems, such as $k$-means or $k$-centers, are NP-hard to solve in general. However, those problems become polynomial-time solvable (or approximable) if the solution $\C$ meets certain stability properties. Here we show that two of these properties, the $\alpha$-center proximity of~\citet{Awashti2012-stability} and the $(1+\epsilon)$-perturbation resilience of~\citet{Bilu2012-stable}, imply a positive one-versus-all margin. Let us recall these properties. We define a $(1+\epsilon)$-perturbation of $d$ as any function $d'$ (which need not be a pseudometric) such that $d \le d' \le (1+\epsilon)d$.
\begin{definition}
Let $\C$ be a center-based clustering.
\begin{itemize}[nosep,leftmargin=12pt]
    \item $\C$ satisfies $\alpha$-center proximity with $\alpha > 1$ if, for all $i \in [k]$, for all $x \in C_i$ and all $j \ne i$ we have $d(x, c_j ) > \alpha\, d(x, c_i)$.
    \item $\C$ is $(1+\epsilon)$-perturbation resilient with $\epsilon > 0$ if it is induced by the same centers $c_1,\ldots,c_k$ under any $(1+\epsilon)$-perturbation of $d$.
\end{itemize}
\end{definition}
It is known that $(1+\epsilon)$-perturbation resilience implies $\alpha$-center proximity with $\alpha=1+\epsilon$, see~\citep{Awashti2012-stability}. Our result is:
\begin{restatable}{retheorem}{proxitheorem}
\label{thm:margin_from_proximity}
If $\C$ satisfies $\alpha$-center proximity, then it has one-versus-all margin at least $\frac{(\alpha-1)^2}{2(\alpha+1)}$. Hence, if $\C$ satisfies $(1+\epsilon)$-perturbation stability, then it has one-versus-all margin at least $\frac{\epsilon^2}{2(\epsilon+2)}$.
\end{restatable}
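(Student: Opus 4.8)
The plan is to prove the $\alpha$-center-proximity bound directly by a short chain of triangle inequalities, and then to read off the perturbation-resilience bound as a corollary: since $(1+\epsilon)$-perturbation resilience implies $(1+\epsilon)$-center proximity (Awasthi et al.), substituting $\alpha=1+\epsilon$ into $\frac{(\alpha-1)^2}{2(\alpha+1)}$ gives $\frac{\epsilon^2}{2(\epsilon+2)}$. For the first bound I will exhibit a one-versus-all margin with respect to the constant choice $d_1=\cdots=d_k=d$, where $d$ is the pseudometric defining the center-based clustering.

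Fix a cluster index $i$, take any $x\in C_i$ and any $y\in X\setminus C_i$, and let $j\ne i$ be such that $y\in C_j$. (When $C_i=\emptyset$ or $X\setminus C_i=\emptyset$ there is nothing to prove; when $\diam_d(C_i)=0$ the chain below still yields $d(x,y)>0$, which is all that is needed.) Write $a=d(x,c_i)$, $b=d(y,c_j)$, $t=d(x,y)$, and let $R_i=\sup_{x'\in C_i}d(x',c_i)$, so that $\diam_d(C_i)\le 2R_i$ by the triangle inequality through $c_i$. Applying $\alpha$-center proximity to $x$ and then the triangle inequality gives $\alpha a< d(x,c_j)\le t+b$, and symmetrically $\alpha b< d(y,c_i)\le t+a$; adding the two yields $(\alpha-1)(a+b)<2t$, hence $a+b<\tfrac{2t}{\alpha-1}$ and therefore $d(c_i,c_j)\le a+t+b<\tfrac{\alpha+1}{\alpha-1}\,t$.

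The one idea of the proof is now to route the comparison through the inter-center distance by also lower-bounding $d(c_i,c_j)$ in terms of $R_i$. For every $x'\in C_i$, center proximity and the triangle inequality give $\alpha\,d(x',c_i)<d(x',c_j)\le d(x',c_i)+d(c_i,c_j)$, so $d(x',c_i)<\tfrac{1}{\alpha-1}d(c_i,c_j)$; taking the supremum over $x'\in C_i$ gives $R_i\le\tfrac{1}{\alpha-1}d(c_i,c_j)$. Combining the two estimates, $t>\tfrac{\alpha-1}{\alpha+1}d(c_i,c_j)\ge\tfrac{(\alpha-1)^2}{\alpha+1}R_i\ge\tfrac{(\alpha-1)^2}{2(\alpha+1)}\diam_d(C_i)$. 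Since $X$ is finite, $d(X\setminus C_i,C_i)$ is a minimum over finitely many pairs $(x,y)$, each obeying this strict inequality, so $d(X\setminus C_i,C_i)>\tfrac{(\alpha-1)^2}{2(\alpha+1)}\diam_d(C_i)$, which is exactly the claimed one-versus-all margin. Applying this with $\alpha=1+\epsilon$ and invoking the implication $(1+\epsilon)$-perturbation resilience $\Rightarrow$ $(1+\epsilon)$-center proximity yields the stated bound $\tfrac{\epsilon^2}{2(\epsilon+2)}$.

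I do not expect a genuine obstacle here, as the argument is elementary; the steps that need care are (i) using the center $c_j$ of the cluster that actually contains $y$ when bounding $d(c_i,c_j)$ both from above and from below, (ii) passing from the pairwise strict inequalities to $d(X\setminus C_i,C_i)$ via finiteness of $X$ so as to keep the inequality strict, and (iii) the degenerate cases $C_i=\emptyset$ and $|C_i|=1$, where $\diam_d(C_i)=0$ and the claim is immediate from the definitions of $d(\cdot,\emptyset)$ and of the diameter.
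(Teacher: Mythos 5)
Your proof is correct and takes a genuinely cleaner route than the paper's. The paper fixes $x\in C_i$ and $y\in C_j$ and argues asymmetrically, splitting into the case $x=c_i$ (handled by a separate chain of inequalities) and the case $x\ne c_i$, which is itself divided into two subcases according to whether the ratio $\beta = d(x',c_i)/d(x,c_i)$ (with $x'$ the farthest point of $C_i$ from $c_i$) is above or below $\tfrac{\alpha+1}{\alpha-1}$. Your argument instead symmetrizes over $x$ and $y$: adding the two center-proximity inequalities $\alpha\,d(x,c_i)<t+d(y,c_j)$ and $\alpha\,d(y,c_j)<t+d(x,c_i)$ gives $d(x,c_i)+d(y,c_j)<\tfrac{2t}{\alpha-1}$, from which a single application of the triangle inequality routes everything through the inter-center distance $d(c_i,c_j)$, bounded above by $\tfrac{\alpha+1}{\alpha-1}t$ and below by $(\alpha-1)R_i$. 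This replaces the paper's case analysis with one uniform chain and makes the role of the inter-center distance as the ``bottleneck'' quantity explicit. Both proofs deliver the same constant $\tfrac{(\alpha-1)^2}{2(\alpha+1)}$; your version is shorter and avoids the paper's potentially confusing assertion $\diam_d(C_i)=d(x',c_i)$ in the $x=c_i$ case (which should be $\diam_d(C_i)\le 2\,d(x',c_i)$, though the paper's bound still goes through). The handling of degenerate cases ($C_i=\emptyset$, $X\setminus C_i=\emptyset$, $\diam_d(C_i)=0$) and the passage from the pointwise strict inequality to $d(X\setminus C_i,C_i)$ via finiteness of $X$ are both sound.
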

The proof of Theorem~\ref{thm:margin_from_proximity} is found in the appendix.

\subsection{Cluster recovery with one-versus-all margin}
\label{sub:margin_learning}
We conclude by showing that, if $\C$ has one-versus-all margin $\gamma > 0$, then one can recover $\C$ with $\scO(\log n)$ queries. Our algorithm, \MarginAlgo, considers the set of all possible clusters that satisfy the margin, and for each label selects the smallest hypotheses consistent with the sampled points. To prove that this approach works, we establish a formal connection between one-versus-all margin and one-sided-error learnability of the concept classes induced by all possible clusters with margin.

We need some further notation. As usual, let $d$ be any pseudometric over $\scX$. For any $X \subset \scX$ and any $r > 0$, we denote by $\PackNum(X,r,d)$ the maximum cardinality of any $A \subseteq X$ such that $d(x,y) > r$ for all distinct $x,y \in A$. From now on we assume that $\PackNum(X,r,d)$ is bounded, and for any $\gamma>0$, we define $M(\gamma,d) = \max\{\PackNum\big(B(x,r),\gamma r,d \big) : x \in \scX, r > 0\}$. Hence, for any $\gamma > 0$, any ball $B$ in $\scX$ contains at most $M(\gamma,d)$ points at pairwise distance greater than $\gamma$ times the radius of $B$, and some $B$ attains this bound.\footnote{If $\PackNum(X,r,d)$ is not bounded, then our results can be extended in the natural way, that is, we can prove a lower bound of $\Omega(n)$ queries for instances of $n$ points.} Finally, by $\vcdim(H,X)$ we denote the VC-dimension of a generic concept class $H$ over a set $X$.
\begin{restatable}[One-versus-all margin implies one-sided-error learnability]{relemma}{Hlemma}\label{lem:H_properties}
Let $d$ be any pseudometric over $\scX$. For any finite $X \subset \scX$ and any $\gamma > 0$, define the effective concept class over $X$:
\begin{align}
   H = \left\{C \subseteq X \,:\, d(X \setminus C, C) > \gamma \, \diam_d(C) \right\}
\end{align}
Then $H = I(H)$, and $\vcdim(H,X) \le M$ where $M = \max(2,M(\gamma,d))$. As a consequence, $H$ can be learned with one-sided error $\epsilon$ and confidence $\delta$ with $\scO\big(\epsilon^{-2}(M \log \nicefrac{1}{\epsilon} + \nicefrac{1}{\delta})\big)$ examples by choosing the smallest consistent hypothesis in $H$.
\end{restatable}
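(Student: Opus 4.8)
The plan is to prove the three assertions in order: $H=I(H)$; then $\vcdim(H,X)\le M$; and then the sample bound, which drops out of the first two facts via a standard PAC argument. \emph{Step 1: $H=I(H)$.} Since $I(H)$ is exactly the family of finite intersections of members of $H$ and $H\subseteq I(H)$ trivially, it suffices to show $H$ is closed under pairwise intersection and conclude by induction. Take $C,C'\in H$. If $C\cap C'=\emptyset$ then $C\cap C'\in H$ by the conventions $d(X,\emptyset)=\infty$ and $\diam_d(\emptyset)=0$. Otherwise, use three routine monotonicity facts for pseudometrics: $\diam_d$ is monotone under inclusion; $d(\,\cdot\,,S')\ge d(\,\cdot\,,S)$ when $S'\subseteq S$; and $d(A\cup B,S)=\min\{d(A,S),d(B,S)\}$. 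Combining these with $X\setminus(C\cap C')=(X\setminus C)\cup(X\setminus C')$, with $\diam_d(C\cap C')\le\min\{\diam_d(C),\diam_d(C')\}$, and with the margin inequalities for $C$ and for $C'$ gives $d(X\setminus(C\cap C'),\,C\cap C')>\gamma\,\diam_d(C\cap C')$, i.e.\ $C\cap C'\in H$.

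\emph{Step 2: $\vcdim(H,X)\le M$.} Let $A=\{a_1,\dots,a_t\}\subseteq X$ be shattered by $H$. If $t\le 2$ we are done since $M\ge 2$, so assume $t\ge3$. For every pair $i\ne j$ fix $C_{ij}\in H$ with $C_{ij}\cap A=\{a_i,a_j\}$; then $a_i,a_j\in C_{ij}$ while $a_\ell\notin C_{ij}$ for all $\ell\notin\{i,j\}$, so the margin condition for $C_{ij}$ gives, for all pairwise distinct $i,j,\ell$,
\[
 d(a_i,a_\ell)\ \ge\ d\big(X\setminus C_{ij},\,C_{ij}\big)\ >\ \gamma\,\diam_d(C_{ij})\ \ge\ \gamma\,d(a_i,a_j)\,.\qquad(\star)
\]
If $\gamma\ge1$, applying $(\star)$ to $(i,j,\ell)$ and to $(i,\ell,j)$ yields $d(a_i,a_\ell)>\gamma^2\,d(a_i,a_\ell)$, which is impossible (the degenerate case $d(a_i,a_\ell)=0$ is excluded directly by $(\star)$); so no set of size $\ge3$ is shattered and $\vcdim(H,X)\le2\le M$. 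If $0<\gamma<1$, for each $i$ let $r_i=\max_{j\ne i}d(a_i,a_j)$, so $A\subseteq B(a_i,r_i)$, and note $r_i>0$ (if some $r_i=0$ then all distances from $a_i$ vanish, contradicting $(\star)$). For every $\ell\ne i$ we then have $d(a_i,a_\ell)>\gamma r_i$: if $a_\ell$ does not attain $r_i$, this is $(\star)$ applied with the attaining point; if it does, it reads $r_i>\gamma r_i$, which holds since $0<\gamma<1$ and $r_i>0$. Setting $i_0=\argmin_i r_i$, for any distinct $\ell,m$ the bound with $i=\ell$ gives $d(a_\ell,a_m)>\gamma r_\ell\ge\gamma r_{i_0}$. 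Hence $A\subseteq B(a_{i_0},r_{i_0})$ and all pairwise distances in $A$ exceed $\gamma r_{i_0}$, so $t=|A|\le\PackNum\big(B(a_{i_0},r_{i_0}),\,\gamma r_{i_0},\,d\big)\le M(\gamma,d)\le M$.

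\emph{Step 3: one-sided learnability.} Since $H=I(H)$, for any sample $S$ labeled by a target $h^\star\in H$ with positive part $S'$, the smallest $h\in H$ consistent with $S$, namely $h^\circ=\bigcap\{h\in H:h\cap S=S'\}$, is well defined, belongs to $H$, is consistent with $S$, and satisfies $h^\circ\subseteq h^\star$; in particular a learner returning $h^\circ$ makes no false positives, so it is one-sided. By the standard realizable PAC / uniform-convergence bound for a class of VC dimension at most $M$, a sample of $\scO\big(\epsilon^{-2}(M\log\nicefrac{1}{\epsilon}+\nicefrac{1}{\delta})\big)$ points ensures, with probability $\ge1-\delta$, that every $h\in H$ consistent with $S$ has $\scP(h\,\triangle\,h^\star)\le\epsilon$; for $h^\circ$ this yields $\scP(h^\star\setminus h^\circ)=\scP(h^\circ\,\triangle\,h^\star)\le\epsilon$, the desired guarantee.

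\emph{Main obstacle.} The crux is Step 2: the inequalities $(\star)$ only relate $d(a_i,a_\ell)$ to distances emanating from the \emph{same} vertex $a_i$, whereas to invoke $M(\gamma,d)$ one needs a \emph{uniform} statement that $A$ is a $\gamma r$-separated subset of one ball of radius $r$. Centering that ball at the vertex $a_{i_0}$ of minimum ``radius'' $r_{i_0}$ is what aligns all the per-vertex bounds; the residual issue — the pair attaining $r_{i_0}$ sits at distance exactly $r_{i_0}$, which exceeds $\gamma r_{i_0}$ only when $\gamma<1$ — is precisely why the cases $\gamma\ge1$ and $\gamma<1$ must be handled separately.
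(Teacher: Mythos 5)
Your proof is correct, and it follows the same three-step skeleton as the paper's (closure of $H$ under intersection, a packing-number bound on the dimension, then the PAC bound). The interesting difference is in Step~2. The paper never bounds $\vcdim(H,X)$ by a shattering argument; it instead bounds the \emph{slicing dimension} $\sldim(H,X)$ (which only tests the co-singleton subsets $S\setminus\{x\}$) and then imports Lemma~5.19 of \citet{Kivinen95}, $\vcdim(I(H),X)\le\sldim(H,X)$, together with $H=I(H)$. You bypass the slicing machinery entirely: you take a shattered set $A$, extract from the shattering only the \emph{pair}-subsets $\{a_i,a_j\}$ to get the local inequality $(\star)$, and then align the per-vertex bounds by centering the packing at the vertex $a_{i_0}$ of minimum local radius $r_{i_0}$. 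The paper's version of this step is slightly slicker because the co-singleton subsets directly yield $d(S\setminus x, x)>\gamma\,\diam_d(S\setminus x)$, which gives a single clean packing in $B(x,\diam_d(S))$ and avoids the $\argmin$ device; but it pays for that by needing the external slicing lemma, which the paper reuses again in Theorem~\ref{thm:kclass_learning}. Both routes are valid and of comparable difficulty. One cosmetic caveat: the final sample-size expression $\scO\big(\epsilon^{-2}(M\log\nicefrac{1}{\epsilon}+\nicefrac{1}{\delta})\big)$ is not what one usually calls ``the standard realizable PAC bound'' (that would be $\scO\big(\epsilon^{-1}(M\log\nicefrac{1}{\epsilon}+\log\nicefrac{1}{\delta})\big)$); it is a weaker, hence still sufficient, bound that the lemma happens to state, so your conclusion stands even if the attribution is imprecise.
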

\begin{proof}[Sketch of the proof.]
To prove that $H = I(H)$, one can take any two $C_1,C_2 \in H$ and show that $C_1 \cap C_2$ satisfies the margin condition, too. To prove that $\vcdim(H,X) \le M$, we have two steps. First, let $\sldim(H,X)$ be the \emph{slicing dimension} of $H$. This is the size of the largest subset $S \subseteq X$ \emph{sliced} by $H$, i.e., such that for every $x \in S$ there is $C \in H$ giving $S \setminus x = S \cap C$, see~\citep{Kivinen95}. As the same work shows, we have $\vcdim(I(H),X) \le \sldim(H,X)$; hence, to prove $\vcdim(H,X) \le M$ it suffices to prove that $\sldim(H,X) \le M$. To this end, we use a packing argument. Suppose that $S \subseteq X$ is sliced by $H$, choose any $x \in S$, and let $C \in H$ such that $S \setminus x = S \cap C$. By construction of $H$, we know that $d(C,x) > \gamma \diam_d (C)$. Since $S \setminus x \subseteq C$, this yields:
\begin{align}
    d\big(S \setminus x, x\big) \ge d(C, x) > \gamma \, \diam_{d}(C) \ge \gamma \, \diam_{d}(S \setminus x)
\end{align}
It can be shown that this implies $d(S \setminus x, x) > \gamma \diam_{d}(S)$ for all $x \in S$, and, in turn, $|S| \le M$. The claim on the learnability with one-sided error holds by choice of the smallest consistent hypothesis in $H=I(H)$, combined with standard PAC bounds.
\end{proof}

We can now present our main result. We let $M(\gamma) = \max_{d \in \{d_1,\ldots,d_k\}} M(\gamma,d)$, with $d_1,\ldots,d_k$ as in Definition~\ref{def:ova_margin}.
\begin{restatable}{retheorem}{ovatheorem}
\label{thm:ova_margin}
Let $(X,O)$ be any instance whose latent clustering $\C$ has one-versus-all margin $\gamma > 0$ with respect to $d_1,\ldots,d_k$. Then \MarginAlgo$(X,O,\gamma)$ deterministically outputs $\C$ while making, with high probability, at most $\scO(M k \log k \log n)$ queries to $O$, where $M=\max(2,M(\gamma))$. Moreover, for any algorithm $\scA$ and for any $\gamma > 0$, there are instances with one-versus-all margin $\gamma$ on which $\scA$ makes $\Omega(M(2\gamma))$ queries in expectation.
\end{restatable}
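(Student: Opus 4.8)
I would prove the two halves of the statement — the algorithmic upper bound and the information‑theoretic lower bound — independently.

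\textbf{The upper bound.} The plan is to run the ``boost a one‑sided learner'' scheme already used for \CoolAlgo, but with the convex‑hull expansion trick replaced by the ERM‑based one‑sided learner of Lemma~\ref{lem:H_properties}. Algorithm \MarginAlgo\ keeps the set $X'$ of still‑unlabeled points (initially $X'=X$) and repeats the following round until $X'=\emptyset$: draw a uniform i.i.d.\ sample $S\subseteq X'$ of size $N$ (to be fixed), query the labels of all points of $S$ to get the induced partition $S_1,\dots,S_k$, and for each $i\in[k]$ form the effective class $H_i=\{C\subseteq X': d_i(X'\setminus C,C)>\gamma\,\diam_{d_i}(C)\}$, compute the smallest $\widehat C_i\in H_i$ consistent with the sample (i.e.\ $\widehat C_i\cap S=S_i$), label every point of $\widehat C_i$ with $i$, and delete it from $X'$. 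The first thing to check is that the one‑versus‑all margin is \emph{hereditary}: restricting from $X$ to $X'$ only shrinks each ``other‑cluster'' set, $X'\setminus(C_i\cap X')\subseteq X\setminus C_i$, and each diameter, $\diam_{d_i}(C_i\cap X')\le\diam_{d_i}(C_i)$, so $C_i\cap X'\in H_i$ in every round. Since $H_i=I(H_i)$ by Lemma~\ref{lem:H_properties}, the smallest consistent $\widehat C_i$ is well defined and satisfies $\widehat C_i\subseteq C_i\cap X'$; hence \emph{correctness} is immediate and deterministic — no point is ever mislabeled, the sets $\widehat C_i$ drawn in one round are pairwise disjoint, and once $|X'|\le N$ a single round labels everything, so the algorithm always halts and outputs $\C$.

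\textbf{Progress and query count.} Conditioned on $|S_i|=t$, the set $S_i$ is $t$ i.i.d.\ uniform points of $C_i\cap X'$, so the one‑sided learning guarantee of Lemma~\ref{lem:H_properties} (VC‑dimension $\le M$, error $\tfrac12$, confidence $\delta$) shows that whenever $|S_i|\ge s:=\scO(M+\nicefrac1\delta)$ we get $|\widehat C_i|\ge\tfrac12|C_i\cap X'|$ with probability $\ge 1-\delta$, i.e.\ the surviving part of $C_i$ halves. Choosing $N=\Theta(k\log k\cdot s)$, a Chernoff bound shows that with probability $\ge 1-\delta$ every cluster with $|C_i\cap X'|\ge \tfrac{2}{k\log k}|X'|$ receives at least $s$ sample points (hence halves), while the remaining clusters together hold at most $\tfrac{2}{\log k}|X'|\le\tfrac14|X'|$ points; therefore $|X'|$ shrinks by a constant factor each round, so $\scO(\log n)$ rounds suffice. (For $k$ below an absolute constant one instead takes $N=\Theta(ks)$ and uses plain pigeonhole — some cluster then has $\ge s$ samples and halves — which still gives $\scO(\log n)$ rounds since $k=\scO(1)$.) Taking $\delta$ small enough and union‑bounding over the $\scO(k\log n)$ calls to the learner makes the failure probability negligible, and the number of queries is (number of rounds)$\times N=\scO(\log n)\cdot\scO(k\log k\cdot s)=\scO(Mk\log k\log n)$.

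\textbf{The lower bound.} Fix $\gamma>0$, let $d$ attain $L:=M(2\gamma)=M(2\gamma,d)$, witnessed by a ball $B(x^\star,r^\star)$ and a set $A\subseteq B(x^\star,r^\star)$ with $|A|=L$ and $d(a,a')>2\gamma r^\star$ for all distinct $a,a'\in A$. For each $a\in A$ define the instance $\C^{(a)}$ on $X=A$ by $C_1=A\setminus\{a\}$, $C_2=\{a\}$, the other clusters empty, with $d_1=d_2=d$. Since $\diam_d(A\setminus\{a\})\le\diam_d(B(x^\star,r^\star))\le 2r^\star$, we have $d(\{a\},A\setminus\{a\})>2\gamma r^\star\ge\gamma\,\diam_d(A\setminus\{a\})$ and $d(A\setminus\{a\},\{a\})>0=\gamma\,\diam_d(\{a\})$, so every $\C^{(a)}$ has one‑versus‑all margin $\gamma$. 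Now use the adversary that answers every query with label $1$: after any $t\le L-2$ queries at least two points $y,y'\in A$ are unqueried, and the answers so far are consistent with both $\C^{(y)}$ and $\C^{(y')}$, which are distinct clusterings, so no correct algorithm can stop before $L-1$ queries. For the expected statement, draw $a$ uniformly in $A$: because every non‑$a$ answer is the same, a deterministic algorithm effectively inspects the points of $A$ in a fixed order and can halt only once it has probed $a$ (or eliminated all but $a$), needing $\ge(L-1)/2$ queries in expectation; Yao's minimax principle transfers this to randomized algorithms, yielding an instance on which any algorithm makes $\Omega(M(2\gamma))$ queries in expectation. Finally, the routine parts here are the PAC/Chernoff estimates, the heredity check, and the packing computation; the one genuinely delicate point is calibrating the per‑round sample size to $N=\Theta(k\log k\cdot s)$ — this is exactly what buys $\scO(\log n)$ rounds (rather than the naive $\scO(k\log n)$) and hence the $k\log k$, instead of $k^2$, factor in the final bound.
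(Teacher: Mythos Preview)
Your lower-bound argument is essentially the paper's: both build a $2\gamma$-packing of a ball, randomize over clusterings $(\{x\},X\setminus\{x\})$, verify the margin, and invoke Yao. For the upper bound you use the same skeleton --- boost the one-sided learner of Lemma~\ref{lem:H_properties} --- but your per-round progress analysis takes a different route. The paper simply invokes the learner for each $i$ with error $\epsilon=\tfrac{1}{2k}$ measured against the uniform distribution on \emph{all} of $X'$ and confidence $\delta=\tfrac12$; this gives $\Pr\!\big(|\hat C_i|\ge|C_i|-|X'|/2k\big)\ge\tfrac12$, hence $\E|\hat C_i|\ge|C_i|/2-|X'|/4k$, and summing over $i$ yields an expected $|X'|/4$ points labeled per round, from which $\scO(\log n)$ rounds follow by the standard expected-progress boosting lemma. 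Your route instead conditions on $|S_i|$, applies the learner with error $\tfrac12$ relative to $C_i$ alone, uses Chernoff to guarantee that every large cluster receives $\ge s$ samples, and case-splits on $k$.

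This is correct in spirit but carries a small calibration tension: your last step union-bounds over the $\scO(k\log n)$ learner and Chernoff events, which forces $\delta=\scO(1/(k\log n))$ and hence $s=\Omega(M+\log k+\log\log n)$ rather than $s=\scO(M)$, overshooting the stated $\scO(Mk\log k\log n)$ bound when $M$ is small relative to $k,n$. The fix is to keep $\delta$ a constant and argue \emph{expected} constant-factor shrinkage per round (as the paper does) rather than high-probability shrinkage; once you do that, the Chernoff step and the case split on $k$ become unnecessary, and the paper's simpler $\epsilon=\tfrac{1}{2k}$ device recovers the bound directly.
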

\begin{proof}[Sketch of the proof.]
For the lower bounds, we take a set $X$ on $M(2\gamma)$ points at pairwise distance larger than $2\gamma$ times the radius of $X$, which is at least $\gamma$ times the diameter of $X$, and we draw a random clustering $\C$ in the form $(x, X \setminus x)$. One can see that $\C$ has one-versus-all margin $\gamma$, and simple arguments, coupled with Yao's principle for Monte Carlo algorithms, show that any algorithm needs $\Omega(M(2\gamma))$ queries in the worst case to return $\C$.
For the upper bounds, we show how to learn an expected constant fraction of $X$ with one-sided error using $\Theta(M k \lg k)$ queries; the rest follows by our general boosting argument. To begin, for each $i \in [k]$ we let $H_i = \left\{C \subseteq X \,:\, d_i(C, X \setminus C) > \gamma \, \diam_{d_i}(C) \right\}$. We then set $\epsilon=\nicefrac{1}{2k}$ and $\delta=\nicefrac{1}{2}$, and draw a labeled sample $S$ of size $\Theta(\epsilon^{-1}(M \log \nicefrac{1}{\epsilon} + \log \nicefrac{1}{\delta})) = \Theta(M k \log k)$. Finally, for each $i \in [k]$ we choose the smallest hypothesis $\hat{C}_i \in H_i$ consistent with the subset $S_i \subseteq S$ labeled as $i$, and we assign label $i$ to all points in $\hat{C}_i$. By Lemma~\ref{lem:H_properties}, with probability at least $\nicefrac{1}{2}$ we have $|\hat{C}_i| \ge |C_i| - \epsilon |X| = |C_i| - |X|/2k$. As $|\hat{C}_i| \ge 0$, this implies $\E|\hat{C}_i| \ge |C_i|/2 - |X|/4k$. By summing over all $i$, this shows that we are labelling correctly at least $|X|/4$ points in expectation.
\end{proof}

\textbf{Remark.} By Theorem~\ref{thm:ova_margin}, in $\R^m$ \MarginAlgo\ yields a $\scO(\log n)$ query bound even when the clusters are \emph{not} convex. However, this does not mean that \MarginAlgo\ subsumes \CoolAlgo. First, as noted above, here $d_1,\ldots,d_k$ are known in advance. Second, \MarginAlgo\ works by computing the smallest hypothesis $\hat{C}_i$ consistent with $S_i$ (see the proof of Theorem~\ref{thm:ova_margin}), which in general may take superpolynomial time. Indeed,  \CoolAlgo\ runs in polynomial time by \emph{not} computing $\hat{C}_i$ at all.

\section{One-versus-all clusterings}
In this section, we study active cluster recovery when the clusters can be realized by binary concepts from some concept class $\Hyp$. In other words, this is the clustering equivalent of the classic one-versus-all formulation of multiclass classifiers (see, e.g.,~\citep{SS14understanding}). We show that the active recoverability of clusterings in this setting is driven by the \emph{coslicing dimension} of $\Hyp$, a combinatorial quantity similar to the star number of~\cite{hanneke2015minimax} and the slicing dimension of~\citep{Kivinen95}. We also show that, for many natural concept classes in $\R^m$, a bounded coslicing dimension is equivalent to a positive one-versus-all margin.

As usual, let $\scX$ be any domain, $X \subset \scX$ be any finite set, and $\C=(C_1,\ldots,C_k)$ be a clustering of $X$. Let $\Hyp$ be any concept class over $\scX$. We say that $\C$ is realized by $\Hyp$ if for all $i \in [k]$ there is some $h_i \in \Hyp$ such that $C_i = X \cap h_i$. For example, the ellipsoidal clusters of~\citet{BCLP20} can be formulated by letting $\scX=\R^m$ and letting $\Hyp$ to be the family of all ellipsoids in $\R^m$, and the convex clusters of Section~\ref{sec:euclidean} can be formulated by letting $\scX=\R^m$ and letting $\Hyp$ to be the family of all polytopes in $\R^m$. Clearly, we expect that the number of active queries needed to recover $\C$ depends on the complexity of $\Hyp$. This leads us to the following question: what can we say about the recoverability of $\C$ in terms of the concept class $\Hyp$?

\subsection{A general characterization: the coslicing dimension}
In this section, we characterize the active recoverability of $\C$ via the \emph{coslicing dimension} of $\Hyp$, a combinatorial quantity similar to the slicing dimension of~\citet{Kivinen95} and the star number of~\citet{hanneke2015minimax}.
\begin{definition}
\label{def:cosldim}
We say that $\Hyp$ coslices $X \subseteq \scX$ if for all $x \in X$ there exist two concepts $h_x^+,h_x^- \in \Hyp$ such that $X \cap h_x^+ = \{x\}$ and $X \cap h_x^- = X \setminus \{x\}$. The coslicing dimension of $\Hyp$ is:
\begin{align*}
\cosl(\Hyp)=\sup\{|X| : X \text{ is cosliced by } \Hyp\}
\end{align*}
If $\Hyp$ coslices arbitrarily large sets then we let $\cosl(\Hyp)=\infty$.
\end{definition}
For instance, let $\scX=\R^m$. If $\Hyp$ is the class of linear separators, then $\cosl(\Hyp)=\infty$ (take $X$ to be the set of vertices of an $n$-vertex polytope and use the hyperplane separator theorem). If instead $\Hyp$ is the class of axis-aligned boxes, it can be shown that $\cosl(\Hyp)=2m$. Our main result is:
\begin{theorem}\label{thm:kclass_learning}
If $\cosl(\Hyp) < \infty$ then there is an algorithm that, given any $n$-point instance whose latent clustering $\C$ is realized by $\Hyp$, recovers $\C$ with $\scO\!\left(\cosl(\Hyp) \, k \log k \log n \right)$ queries with high probability. Moreover, for any algorithm $\scA$ there are instances on $\cosl(\Hyp)$ points whose latent clustering $\C$ is realized by $\Hyp$ where $\scA$ makes $\Omega(\cosl(\Hyp))$ queries in expectation to return $\C$. As a consequence, if $\cosl(\Hyp) = \infty$ then any algorithm needs $\Omega(n)$ queries in expectation to recover an $n$-point clustering realized by $\Hyp$.
\end{theorem}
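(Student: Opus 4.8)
The plan is to prove the three parts of Theorem~\ref{thm:kclass_learning} — the algorithmic upper bound, the single-instance lower bound, and the consequence for $\cosl(\Hyp)=\infty$ — largely by reducing the first two to machinery already developed for the one-versus-all margin (Lemma~\ref{lem:H_properties} and the proof of Theorem~\ref{thm:ova_margin}), and then deriving the third by a standard amplification. First I would observe that the coslicing dimension plays here exactly the role that $M(\gamma,d)$ played in Section~\ref{sec:margin}. Concretely, for each label $i \in [k]$ consider the effective concept class $H_i = \{\,X \cap h : h \in \Hyp\,\}$ restricted to the sample, and pass to its intersection class $I(H_i)$; the key claim is that $\vcdim\big(I(H_i),X\big) \le \cosl(\Hyp)$. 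This is because $\vcdim(I(H),X)\le\sldim(H,X)$ by~\citep{Kivinen95} (the same inequality used in Lemma~\ref{lem:H_properties}), and a set $S$ that is \emph{sliced} by $\Hyp$ — for each $x\in S$ there is $h$ with $S\cap h = S\setminus x$ — is in particular cosliced once we also exhibit the singleton witnesses $h_x^+$; but those come for free by a shifting/intersection argument (intersect the slicing concepts for all $y\ne x$ to isolate $\{x\}$ inside $S$), so $\sldim(\Hyp, X)\le\cosl(\Hyp)$. Thus $H_i = I(H_i)$ effectively, with bounded VC dimension, and the smallest consistent hypothesis in $I(H_i)$ — which is $h^{\circ}$ from the Preliminaries and is always contained in the true concept — learns $C_i$ with one-sided error $\epsilon$ from $\scO(\epsilon^{-2}(\cosl(\Hyp)\log\nicefrac1\epsilon + \log\nicefrac1\delta))$ labeled examples. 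Plugging $\epsilon = \nicefrac{1}{2k}$, $\delta = \nicefrac12$ and running the same boosting loop as in Theorem~\ref{thm:ova_margin} — draw a labeled sample, for each $i$ take the smallest consistent concept, label and remove those points, repeat $\scO(\log n)$ times — gives the $\scO(\cosl(\Hyp)\,k\log k\log n)$ query bound.

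For the single-instance lower bound, I would mirror the lower-bound construction in the proof of Theorem~\ref{thm:ova_margin}: take a set $X$ of $t := \cosl(\Hyp)$ points that is cosliced by $\Hyp$, and let the adversary pick the latent clustering uniformly at random from the family $\{(\{x\}, X\setminus\{x\}) : x\in X\}$ of two-cluster partitions that split off one element. Each such clustering is realized by $\Hyp$ precisely because $X$ is cosliced (the witnesses $h_x^+, h_x^-$ realize the two parts). Now a label query on a point $y$ returns label $2$ unless $y$ is the special point $x$, so until the algorithm happens to query $x$ it has learned nothing about which partition it faces; a standard argument via Yao's minimax principle for Monte Carlo (zero-error-on-average) algorithms then shows any algorithm needs $\Omega(t)$ queries in expectation. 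Finally, the $\cosl(\Hyp)=\infty$ consequence follows by the usual trick: if arbitrarily large sets are cosliced, then for any $n$ there is an $n$-point cosliced set, and the same random "split off one point" distribution forces $\Omega(n)$ queries in expectation.

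The main obstacle I anticipate is the precise relationship between \emph{coslicing} and the \emph{slicing} notion of~\citep{Kivinen95}, and making sure the VC-dimension bound on $I(H_i)$ goes through cleanly with the correct constant. Slicing only requires the "large" witnesses $S\setminus x = S\cap h$; coslicing additionally demands singleton witnesses $\{x\} = X\cap h_x^+$, and one must check that on a sliced set these singleton witnesses are automatically available inside $\Hyp$ (or else argue that the $I(H_i)$-VC-dimension is controlled by slicing alone, which is what Lemma~\ref{lem:H_properties}'s proof already does). There is also a mild subtlety in that $\Hyp$ is not assumed closed under intersection, so one must be careful to work with $I(H_i)$ on the finite sample $X$ rather than with $\Hyp$ itself, and to note that the "smallest consistent hypothesis" lives in $I(H_i)$, not necessarily in $\Hyp$ — but since we only need to recover the labeling on $X$, membership in $I(H_i)$ suffices. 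Once these points are pinned down, everything else is a routine reuse of the boosting-plus-one-sided-error template and the Yao-principle lower bound already in the paper.
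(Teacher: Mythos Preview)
Your lower-bound argument and the $\cosl(\Hyp)=\infty$ consequence are correct and match the paper. The gap is in the upper bound, precisely at the step you flagged as an obstacle.

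You set $H_i = \{X \cap h : h \in \Hyp\}$ and then assert $\sldim(\Hyp,X) \le \cosl(\Hyp)$. This inequality is backwards. Coslicing is a strictly stronger requirement than slicing (it asks for both the $S\setminus\{x\}$ witnesses \emph{and} the singleton witnesses), so every cosliced set is sliced and hence $\cosl(\Hyp) \le \sldim(\Hyp)$, not the other way around. Your ``intersection argument'' for producing singleton witnesses only places them in $I(\Hyp)$, not in $\Hyp$, so it does not establish coslicing. A concrete counterexample: take $\scX$ infinite and $\Hyp = \{\scX \setminus \{y\} : y \in \scX\}$. Every finite set is sliced by $\Hyp$ (use $h = \scX \setminus \{x\}$), so $\sldim(\Hyp)=\infty$; but no set of size $\ge 3$ is cosliced, since $X \cap (\scX\setminus\{y\})$ can never be a singleton when $|X|\ge 3$, so $\cosl(\Hyp)=2$. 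With your choice of $H_i$ the Kivinen bound $\vcdim(I(H_i),X)\le\sldim(H_i,X)$ therefore gives nothing useful.

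The paper's fix is to choose $H_i$ more carefully: not all of $\Hyp$ restricted to $X$, but only those subsets that arise as the $i$-th cluster of some $k$-clustering of $X$ realized by $\Hyp$, i.e.\ $H_i = \{C_i' : (C_1',\ldots,C_k') \text{ is a clustering of } X \text{ realized by } \Hyp\}$. Now suppose $U\subseteq X$ is sliced by this $H_i$. For each $x\in U$ there is a realizable clustering whose $i$-th cluster meets $U$ in $U\setminus\{x\}$; the concept realizing cluster $i$ gives $h_x^-$, and since $x$ must lie in some \emph{other} cluster $i'$ of that same clustering (disjoint from cluster $i$, hence disjoint from $U\setminus\{x\}$), the concept realizing cluster $i'$ gives $h_x^+\in\Hyp$ with $U\cap h_x^+=\{x\}$. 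Thus $U$ is cosliced by $\Hyp$, whence $\sldim(H_i,X)\le\cosl(\Hyp)$, and the rest of your boosting argument goes through unchanged. The point is that the clustering structure, not an intersection trick, is what supplies the singleton witnesses inside $\Hyp$.
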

\begin{proof}[Sketch of the proof.]
The proof follows the same ideas of the proof of Theorem~\ref{thm:ova_margin}. For the lower bounds, we take any $X$ cosliced by $\Hyp$ with $|X|=\cosl(\Hyp)$, and we draw a random clustering of $X$ in the form $(x, X \setminus x)$. For the upper bounds, for each $i \in [k]$ we define:
\begin{align}\label{eq:HiHi}
    H_i = \left\{C \,:\, C = C'_i \,\wedge\, (C_1',\ldots,C_k') \in P_k(X) \right\}
\end{align}
where $P_k(X)$ is the set of all clusterings of $X$ realized by $\Hyp$. As observed in the proof of Lemma~\ref{lem:H_properties}, we have the general relationship $\vcdim(I(H_i),X) \le \sldim(H_i,X)$ whenever $\sldim(H_i,X) < \infty$. Therefore, if we show that $\sldim(H_i,X) \le \cosl(\Hyp)$, by drawing a labeled sample of size $\Theta(\cosl(\Hyp) \, k \log k)$ we can recover the labels of an expected constant fraction of $X$, as in the proof of Lemma~\ref{lem:H_properties}. To prove that $\sldim(H_i,X) \le \cosl(\Hyp)$, let $U=\{x_1,\ldots,x_{\ell}\} \subseteq X$ be sliced by $H_i$. By construction of $H_i$, there are $\ell$ clusterings $\C_1,\ldots,\C_{\ell}$ realized by $\Hyp$ and such that $\C_j = (x_j, U \setminus x_j)$ for all $j \in [\ell]$. This implies that $U$ is cosliced by $\Hyp$. Hence, $|U| \le \cosl(\Hyp)$ and so $\sldim(H_i,X) \le \cosl(\Hyp)$.
\end{proof}
\textbf{Remark.} In the case of convex clusters in $\R^m$, we would let $\Hyp$ be the class of all convex polytopes, obtaining $\cosl(\Hyp)=\infty$ and thus Theorem~\ref{thm:kclass_learning} would not provide any useful bound. This is true even if $\C$ has convex hull margin $\gamma > 0$, although by Theorem~\ref{thm:cool} we know that $\C$ can be recovered with $\scO(\log n)$ queries. The same holds for the one-versus-all margin. This is because we defined the coslicing dimension as a function of $\Hyp$, whereas the margin depends on the instance $(X,O)$. To fix this, one can redefine the coslicing dimension in the form $\cosl(\Hyp,\scI)$ where $\scI$ is a class of instances, and adapt Theorem~\ref{thm:kclass_learning} correspondingly. Then, if every instance $(X,O) \in \scI$ has margin $\gamma > 0$, one can bound $\cosl(\Hyp,\scI)$ as a function of $\gamma$, retrieving the same type of bounds of Theorem~\ref{thm:cool} and Theorem~\ref{thm:ova_margin}.

\subsection{The one-versus-all margin, again!}
We look again at the Euclidean case, $\scX=\R^m$. Consider any concept class $\Hyp$. Theorem~\ref{thm:kclass_learning} and the above remark say that, if $\cosl(\Hyp,\scI)<\infty$, where $\scI$ is the class of allowed instances, then any clustering $\C$ realized by $\Hyp$ can be recovered with $\scO(\log n)$ queries, with no need for the one-versus-all margin (Definition~\ref{def:ova_margin}). We show that, for a wide family of concept classes in $\R^m$, both things are actually equivalent: we can achieve the $\scO(\log n)$ bound \emph{if and only} if the instances have positive one-versus-all margin. This establishes a connection between one-versus-all margin and active learnability of clusterings realized by concept classes in $\R^m$. In what follows, we assume that $\Hyp$ satisfies:
\begin{definition}
A concept class $\Hyp$ in ${\R^m}$ is non-fractal if there is $h \in \Hyp$ such that both $h$ and its complement contain a ball of positive radius. 
\end{definition}
This assumption avoids pathological cases (for instance, $\Hyp$ containing only hypotheses that are affine transformations of Cantor's set). Our result is:
\begin{restatable}{retheorem}{affinetheorem}
\label{thm:affine}
Let $\Hyp$ be a concept class in $\R^m$ that is non-fractal and closed under affine transformations. There is an algorithm that, given any instance whose latent clustering $\C$ has one-versus-all margin $\gamma$ and is realized by $\Hyp$, returns $\C$ while making $\scO(M k \log k \log n)$ queries with high probability, where $M=\max\big(2,(1+\nicefrac{4}{\gamma})^m\big)$. Moreover, for any algorithm $\scA$, there exist arbitrarily large $n$-point instances, whose latent clustering $\C$ has arbitrarily small one-versus-all margin and is realized by $\Hyp$, where $\scA$ makes $\Omega(n)$ queries in expectation to recover $\C$.
\end{restatable}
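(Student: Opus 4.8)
The plan is to establish the two directions separately, reusing the machinery of Theorem~\ref{thm:ova_margin} and Theorem~\ref{thm:kclass_learning} as much as possible. For the upper bound, the key observation is that a clustering realized by $\Hyp$ with one-versus-all margin $\gamma$ is, in particular, a clustering with one-versus-all margin $\gamma$ in the sense of Definition~\ref{def:ova_margin} with respect to the Euclidean pseudometrics $d_i = \norm{2}{\cdot}$. Hence \MarginAlgo\ already recovers $\C$ with $\scO(M(\gamma) k \log k \log n)$ queries by Theorem~\ref{thm:ova_margin}; what remains is to bound $M(\gamma)$ for the Euclidean metric, i.e.\ to bound the number of points inside a ball of radius $r$ that are at pairwise distance more than $\gamma r$. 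A standard volumetric packing argument in $\R^m$ gives $M(\gamma,\norm{2}{\cdot}) \le (1 + \nicefrac{2}{\gamma})^m$, and to land on the claimed $(1+\nicefrac{4}{\gamma})^m$ one simply accounts for the fact that the relevant balls may need to be inflated by a small constant factor when passing from ``radius of $X$'' to ``diameter of $C$'' (so that $d_i(X\setminus C_i, C_i) > \gamma\,\diam(C_i)$ translates into pairwise separation $\gamma/2$ relative to a radius, losing a factor $2$). Thus the upper bound does not actually need $\Hyp$ to be non-fractal or affine-closed — those hypotheses are used only for the lower bound — and the proof amounts to instantiating Theorem~\ref{thm:ova_margin} with this explicit packing estimate.

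For the lower bound, the goal is to exhibit, for arbitrarily large $n$, an $n$-point instance realized by $\Hyp$ whose one-versus-all margin is arbitrarily small and on which every algorithm needs $\Omega(n)$ queries. The strategy is to reduce to the coslicing lower bound of Theorem~\ref{thm:kclass_learning}: it suffices to show that a non-fractal, affine-closed $\Hyp$ coslices arbitrarily large sets, and moreover that the cosliced sets can be taken with a tiny but positive one-versus-all margin. First I would use non-fractality: there is $h\in\Hyp$ with a ball $B^+\subseteq h$ and a ball $B^-\subseteq h^c$, both of positive radius. By closure under affine maps, for any finite configuration of points we can affinely move $h$ so that a designated point lands deep inside $B^+$ (hence in the concept) while all the others land deep inside $B^-$ (hence outside); doing this for each point in turn yields the ``singleton'' witnesses $h_x^+$. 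For the ``co-singleton'' witnesses $h_x^-$ with $X\cap h_x^- = X\setminus\{x\}$, I would apply the same idea with the roles of $B^+$ and $B^-$ swapped — this is where affine-closure is essential, since a single affine image of $h$ can be made to contain an arbitrarily large prescribed point set while excluding one prescribed point, provided the point set is placed appropriately (e.g.\ on a line or small cluster that can be squeezed into $B^-$ of the complement of $h$, viewing $h^c$ itself as ``the concept'' after complementation — but since $\Hyp$ need not be closed under complement, one instead directly realizes $X\setminus\{x\}$ by an affine image of $h$, using that $h$ contains a ball and can therefore contain any affine image of a bounded set). Arranging the $n$ points on a segment inside $B^+$, say, makes them simultaneously coslice-able and, because they can be packed into a region of diameter much smaller than their distance to the ``excluded'' location, gives a one-versus-all margin that is positive and can be driven to $0$ by shrinking the segment. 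Then Theorem~\ref{thm:kclass_learning} (or the direct random-clustering argument $(x, X\setminus x)$ plus Yao's principle used there) yields the $\Omega(n)$ bound.

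The main obstacle is the lower-bound construction, specifically verifying that the same point set is \emph{simultaneously} cosliced by $\Hyp$ and endowed with a strictly positive one-versus-all margin, uniformly in $n$. The tension is that coslicing requires, for \emph{each} $x$, a concept containing only $x$ and (separately) a concept containing everything but $x$; realizing all $2n$ of these concepts as affine images of a single fixed $h$ forces the point configuration to be very ``compressible'', which is exactly what also makes the margin small — but one must check the margin does not collapse to zero for finite $n$, and that it genuinely can be taken arbitrarily small (not bounded below) as the construction scales. I expect this to be handled by placing the $n$ points as an affinely regular configuration (e.g.\ equally spaced on a short segment, or vertices of a simplex scaled down) inside the fixed ball $B^+$, and tracking two length scales: the in-cluster diameter $\delta$ and the separation $\Delta$ between the moved region and the rest of the transformed space, with margin $\approx \Delta/\delta$ made large for the coslicing witnesses but, for the \emph{instance} clustering $(x, X\setminus x)$, made small by choosing the geometry so that $x$ sits near the bulk. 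Getting these two requirements to coexist — large margin is not needed, only \emph{some} positive margin that is also allowed to be tiny — is the delicate bookkeeping step; everything else (the packing bound, the reduction to Theorem~\ref{thm:ova_margin}, the reduction to Theorem~\ref{thm:kclass_learning}) is routine.
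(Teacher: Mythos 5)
Your upper bound matches the paper's: both instantiate Theorem~\ref{thm:ova_margin} with the Euclidean packing estimate for $M(\gamma)$, using non-fractality and affine-closure only for the lower bound. The lower bound, however, has a genuine gap. Your strategy fixes a point configuration $X$ and obtains each coslicing witness as an affine image $A(h)$, chosen so that a designated point of $X$ maps under $A^{-1}$ into the ball $B^+\subseteq h$ while the others map into the ball $B^-\subseteq \bar h$ (or the reverse). But this is only possible when the designated point is an \emph{extreme} point of $\conv(X)$: affine maps preserve convex combinations, so if $x\in\conv(X\setminus\{x\})$ and $A^{-1}(X\setminus\{x\})\subset B^-$ (a convex set), then $A^{-1}(x)\in B^-$ too. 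Hence ``equally spaced on a short segment'' fails for every interior point, and ``vertices of a scaled-down simplex'' caps at $m+1$ points, so neither yields $\Omega(n)$ for arbitrarily large $n$. Even with $n$ points in convex position (say on a sphere), the existence of the required affine maps is asserted, not verified.

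The paper sidesteps all of this with a different geometric construction. It takes an $\eta$-packing $X$ of a small sphere $S$ of radius $r'$ whose center lies on the segment joining a ball $B\subseteq h$ to a nearby point $x\in\bar h$, with $\eta=\sup_{y\in S\setminus B}d(x,y)$ calibrated so that $h$ \emph{itself} already gives $X\cap h=X\setminus\{x\}$. Rotations $R$ about the center of $S$ (affine maps preserving $S$ and pairwise distances) then give $X\cap R^{-1}(h)=X\setminus\{x'\}$ for every $x'\in X$; the same argument applied to $\bar h$, followed by complementation, supplies the singleton concepts. Sending $\eta/r'\to 0$ makes $|X|$ arbitrarily large while the margin of the $(x,X\setminus\{x\})$ clustering stays positive and tends to $0$ --- exactly the bookkeeping you flagged as delicate, resolved automatically by the construction. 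You would need to replace your configurations with something along these lines for the lower bound to go through.
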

\begin{minipage}{.66\textwidth}
\begin{proof}[Sketch of the proof.]
The upper bounds follow by Theorem~\ref{thm:ova_margin} and the packing number of $\R^m$. For the lower bounds, we show that arbitrarily large packings of a sphere are cosliced by $\Hyp$. We use Figure~\ref{fig:packing_sketch} for reference. Let $h \in \R^m$ be such that both $h$ and its complement $\bar h$ contain a ball of positive radius. Then, for any $\rho > 0$ there exist a ball $B = B(c,r) \subseteq h$ with $r>0$, and a point $x \in \bar h$ such that $d(B,x) \le \rho$. Now take a sphere $S$ of radius $r' \ll r$ with center on the segment $xc$. Let $\eta = \sup_{y \in S \setminus B} d(x, y)$, and let $X$ be an $\eta$-packing of $S$, that is, a subset of points of $S$ such that $d(x',x'') > \eta$ for all distinct $x',x'' \in X$. Note how this implies that every $x' \in X \setminus x$ necessarily lies in $S \cap B$, and therefore, in $S \cap h$. Moreover, by letting $\eta/r' \to 0$, we can take $X$ arbitrarily large. Since $\Hyp$ is closed under affine transformations, by rotating $X$ it follows that for every $x \in X$ there is $h_x \in \Hyp$ such that $X \cap h_x = X \setminus x$. By applying the same argument to $\bar h$ and by complementation we can show that $X \cap h_x' = \{x\}$ for some $h'_x \in \Hyp$ for all $x \in X$ as well. Hence $\Hyp$ coslices arbitrarily large sets. To conclude, invoke Theorem~\ref{thm:kclass_learning}.
\end{proof}
\end{minipage}
\begin{minipage}{.03\textwidth}
    \hfill
\end{minipage}
\begin{minipage}{.3\textwidth}
    \begin{tikzpicture}[scale=.45,every node/.style={circle,inner sep=.8pt}]
    \node[draw] (orig) at (0,0) {};
    \node[above] (origtxt) at (orig) {$c$};
    \draw[draw=gray,fill=gray,fill opacity=.2] (0,0) circle (3.5);
    \node (B) at (3,-2.4) {$B$};
    \node[draw] (c) at (-2.6,0) {};
    \node[above] (cxt) at (c) {$c'$};
    \def\r{1}
    \def\N{9}
    \pgfmathsetmacro\ang{360/\N}
    \draw[densely dotted] (c) circle (\r);
    \foreach \i in {1,...,\N} {
        \node[draw,fill] (x) at ($(c)-(\ang*\i:\r)$) {};
    }
    \node[left] (xtx) at ($(x)$) {$x$};
    \node[right] (Stxt) at ($(x)+(1.6*\r,-\r)$) {$X$};
    \end{tikzpicture}
    
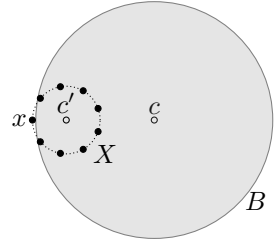
\captionof{figure}{The $\eta$-packing $X$ is in $B$, and thus in $h$, except for $x$ that lies in $\bar h$. By taking $B$ arbitrarily close to $x$, we can make $\eta$ arbitrarily small and thus $X$ arbitrarily large.}
    \label{fig:packing_sketch}
\end{minipage}

Note that Theorem~\ref{thm:affine} applies to several basic concept classes $\Hyp$. For instance, when $\Hyp$ is the class of all linear separators, the class of all ellipsoids, the class of all polytopes, and the class of all convex bodies (bounded or not, and possibly degenerate). It also includes more complex classes whose hypotheses are not convex: for instance, the class of all finite or infinite disjoint unions of balls, polytopes, or convex bodies.

\begin{ack}
The authors gratefully acknowledge partial support by the Google Focused Award ``Algorithms and Learning for AI'' (ALL4AI).  Nicolò Cesa-Bianchi is also supported by the MIUR PRIN grant Algorithms, Games, and Digital Markets (ALGADIMAR) and by the EU Horizon 2020 ICT-48 research and innovation action under grant agreement 951847, project ELISE (European Learning and Intelligent Systems Excellence).
\end{ack}

\bibliographystyle{abbrvnat}
\bibliography{references.bib}

\clearpage
\appendix
\section*{APPENDIX}

\section{Proof of Lemma~\ref{lem:expanded_hull}}
\label{apx:proof_expanded_hull}
\tricklemma*

\textbf{Preliminaries.}
Without loss of generality, we assume that $K$ has full rank (as one can always work in the subspace spanned by $S_C$, which can be computed in time $\scO(|S_C| m)$). For technical reasons, we let $\vartheta = \frac{1}{1+\gamma} \in (0,1)$ and prove the lemma for $s = \Omega\big( \frac{m^5}{\vartheta (1-\vartheta)^m} \ln \frac{1}{\vartheta (1-\vartheta)} \big)$ large enough. To see that any $s \in \Theta\big(m^5 \big(1+\nicefrac{1}{\gamma}\big)^{m} \ln \big(1+ \nicefrac{1}{\gamma}\big)\big)$ satisfies this assumption, first substitute $\vartheta$ to get:
\begin{align}
    \frac{1}{\vartheta (1-\vartheta)^m} \ln \frac{1}{\vartheta (1-\vartheta)}
    = 
    (1+\gamma)\left(\frac{1+\gamma}{\gamma}\right)^{m} \ln \frac{(1+\gamma)^2}{\gamma}
\end{align}
which is in $\scO\big((1+\nicefrac{1}{\gamma})^{m} (1+\gamma)  \ln (1+\nicefrac{1}{\gamma})\big)$. Now note that $(1+\gamma)\ln \big(1+ \nicefrac{1}{\gamma}\big)$ is bounded by $\scO\big(\gamma \cdot \nicefrac{1}{\gamma}\big)=\scO(1)$ for $\gamma > 1$, and by $2\ln\big(1+ \nicefrac{1}{\gamma}\big) = \scO\big(\ln\nicefrac{1}{\gamma}\big)$ when $\gamma \le 1$. Hence, in any case the term $(1+\gamma)  \ln (1+\nicefrac{1}{\gamma})$ is in $\scO(\ln (1+\nicefrac{1}{\gamma}))$. Therefore:
\begin{align}
    \big(1+\nicefrac{1}{\gamma}\big)^{m} \ln \big(1+ \nicefrac{1}{\gamma}\big) = \Omega\left( \frac{1}{\vartheta (1-\vartheta)^m} \ln \frac{1}{\vartheta (1-\vartheta)} \right) 
\end{align}
as claimed.

Before starting with the actual proof, we introduce some further definitions and notation.
\begin{definition}
A convex body $K \subset \R^m$ is in \emph{isotropic position}\footnote{Not to be confused with the definition of~\citep{NotesCB}, where the assumption $\int_K \dotp{x,u}^2 dx = 1$ is replaced by $\vol(K)=1$.}
 if it has center of mass in the origin, $\int_K x\, dx = 0$, and moment of inertia $1$ in every direction, $\int_K \dotp{x,u}^2 dx = 1$ for all $u \in S^{m-1}$.
\end{definition}
We define the norm $\norm{K}{\cdot} = \norm{2}{f(\cdot)}$ where $f=f_K$ is the unique affine transformation such that $f(K)$ is in isotropic position. We let $R(K) = \sup_{x \in K} \norm{2}{x}$ denote the Euclidean radius of $K$, and we let $R_K(K) = \sup_{x \in K} \norm{K}{x}$ denote the isotropic radius of $K$. We also let $d_K(x,y)=\norm{K}{x-y}$ be the isotropic distance of $K$.

Now, the proof has two steps. First, we show that the point $z= \frac{1}{N}\sum_{i=1}^N X_i$ is close to the centroid $\mu_K$ of $K$, according to $d_K(\cdot)$, with good probability. Second, we show that if this is the case, then $\frac{K}{\vartheta}$, where the scaling is meant about $z$, contains a polytope $P$ which contains $K$ and thus $S_C$, and which belongs to a class with VC dimension $\scO\big( \frac{m^5}{\vartheta (1-\vartheta)^m}\big)$. By standard PAC bounds this implies that $|P \cap C| \ge \frac{1}{2}|C|$, with a probability that can be made arbitrarily close to $1$ by adjusting the constants.

\subsubsection*{Step 1: $z$ is close to the centroid of $K$}
Let $\mu_K$ be the center of mass of $K$. We prove:
\begin{lemma}\label{lem:centroid_approx}
Fix any $\eta,p \ge 0$, and choose any $\epsilon \le \frac{\eta}{4(m+1)}$ and $N \ge \frac{16(m+1)^2}{p^2 \eta^2}$. If $X_1,\ldots,X_N$ are drawn independently and $\epsilon$-uniformly at random from $K$, and $\bar X = \frac{1}{N}X_i$, then:
\begin{align}
    \Pr\left(d_K\big(\bar X, \mu_K\big) \le \eta\right) \ge 1-p
\end{align}
\end{lemma}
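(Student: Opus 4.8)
The plan is to reduce to the isotropic case and then run a textbook sample-mean concentration argument, paying attention to the bias introduced by the samples being only $\epsilon$-uniform rather than exactly uniform. First I would pass to isotropic position: let $f=f_K$ be the affine map with $f(K)$ isotropic, so that $d_K(x,y)=\norm{2}{f(x)-f(y)}$ by the very definition of $\norm{K}{\cdot}$. The centroid is affine-equivariant, so $f(\mu_K)=\orig$; and $f$ scales every volume by the same nonzero factor $|\det f|$, hence preserves volume ratios, so $f(X_1),\dots,f(X_N)$ are independent $\epsilon$-uniform points of the isotropic body $f(K)$. Since $f(\bar X)=\tfrac1N\sum_i f(X_i)$, we get $d_K(\bar X,\mu_K)=\norm{2}{f(\bar X)}$, and it therefore suffices to prove the claim when $K$ is itself isotropic, $\mu_K=\orig$, $d_K$ is Euclidean distance, and the goal is $\Pr(\norm{2}{\bar X}\le\eta)\ge 1-p$.

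Next I would bound the bias of $\bar X$. Here I invoke the standard fact that an isotropic convex body in $\R^m$ has Euclidean radius at most $m+1$, so $\norm{2}{X_i}\le m+1$ almost surely. Let $\nu=\E[X_1]$ be the mean of the $\epsilon$-uniform law (which need not equal $\orig$). For any unit vector $u$ the linear functional $x\mapsto\dotp{x,u}$ has mean $0$ under the uniform law on $K$ (centroid at the origin) and absolute value at most $m+1$ on $K$; since the $\epsilon$-uniform law and the uniform law on $K$ are within total variation $\epsilon$, we get $|\dotp{\nu,u}|\le 2(m+1)\epsilon$. Taking the supremum over $u$, $\norm{2}{\nu}\le 2(m+1)\epsilon\le \eta/2$ by the hypothesis $\epsilon\le\tfrac{\eta}{4(m+1)}$.

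Finally I would close with a second-moment estimate around $\nu$. As the $X_i$ are i.i.d.\ with $\norm{2}{X_i}\le m+1$, $\E\big[\norm{2}{\bar X-\nu}^2\big]=\tfrac1N\E\big[\norm{2}{X_1-\nu}^2\big]\le \tfrac1N\E\big[\norm{2}{X_1}^2\big]\le \tfrac{(m+1)^2}{N}$. Markov's inequality applied to $\norm{2}{\bar X-\nu}^2$ then gives $\Pr\big(\norm{2}{\bar X-\nu}>\tfrac{\eta}{2}\big)\le \tfrac{4(m+1)^2}{N\eta^2}\le \tfrac{p^2}{4}\le p$, using $N\ge \tfrac{16(m+1)^2}{p^2\eta^2}$ (and $p\le 1$; the case $p\ge 1$ is trivial). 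Since $\norm{2}{\bar X}\le\norm{2}{\bar X-\nu}+\norm{2}{\nu}\le\norm{2}{\bar X-\nu}+\eta/2$, the event $\norm{2}{\bar X}>\eta$ is contained in $\{\norm{2}{\bar X-\nu}>\eta/2\}$, and hence $\Pr(\norm{2}{\bar X}\le\eta)\ge 1-p$, as required.

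The only step that is not purely mechanical is controlling the bias $\nu$: for exactly uniform samples $\nu=\orig$ and a plain variance bound would suffice, but the $\epsilon$-uniform slack creates a first-order error that must be dominated by $\eta/2$, which is exactly what forces the quantitative coupling $\epsilon\lesssim \eta/(m+1)$ appearing in the hypothesis. Everything else is volume-ratio bookkeeping plus one line of Chebyshev/Markov, modulo the classical linear bound on the radius of an isotropic body, which I would simply cite.
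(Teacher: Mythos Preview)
Your proof is correct and follows essentially the same route as the paper: reduce to isotropic position (using that the affine map preserves volume ratios and hence $\epsilon$-uniformity), bound the bias $\norm{2}{\E X_i}\le 2\epsilon(m+1)\le\eta/2$ via the radius bound $R_K(K)\le m+1$, and then control $\norm{2}{\bar X-\E\bar X}$ by a second-moment argument. The only cosmetic differences are that the paper bounds the bias via a maximal-coupling argument rather than the dual test-function bound for total variation (same $2\epsilon(m+1)$ output), and it applies Jensen plus Markov to $\norm{2}{\bar Y}$ whereas you apply Markov directly to $\norm{2}{\bar X-\nu}^2$; your variant is in fact slightly sharper but both suffice.
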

For the proof of Lemma~\ref{lem:centroid_approx}, we need two ancillary results.
\begin{lemma}\label{lem:Rk}
$R_K(K) \le m+1$.
\end{lemma}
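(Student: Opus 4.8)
\textbf{Proof plan (Lemma~\ref{lem:Rk}).}
Since $R_K(K) = \sup_{x \in K}\norm{K}{x} = \sup_{y \in f_K(K)}\norm{2}{y}$, the plan is to assume $K$ is already in isotropic position and to bound its Euclidean radius $R := \max_{x \in K}\norm{2}{x}$. Pick $p \in K$ with $\norm{2}{p} = R$ and rotate so that $p = R\,\be_1$. The object to study is the marginal profile $g(t) := \vol_{m-1}\big(K \cap \{x_1 = t\}\big)$, supported on an interval $[-a,R]$ with $a \le R$. Brunn--Minkowski makes $g^{1/(m-1)}$ concave; since $p$ lies on the sphere of radius $R$, the top slice $K \cap \{x_1 = R\}$ equals $\{p\}$, so $g(R)=0$; and the isotropy conditions $\int_K x\,dx = \orig$ and $\int_K x_1^2\,dx = 1$ become the one-dimensional constraints $\int_{-a}^{R} t\,g(t)\,dt = 0$ and $\int_{-a}^{R} t^2\,g(t)\,dt = 1$.

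I would then show that the profile is ``at least as spread out'' as a cone. Concretely, among all $g$ with $g^{1/(m-1)}$ concave on an interval ending at $R$, with $g(R)=0$ and barycentre $0$, the quantity $\int t^2 g / \int g$ is minimised by the cone profile $g_\star(t) = c\,(R-t)^{m-1}$ on $[-R/m,\,R]$ (the left endpoint $-R/m$ is forced by the barycentre condition), and a Beta-integral computation gives $\int t^2 g_\star / \int g_\star = R^2/\big(m(m+2)\big)$. Since $\int g = \vol(K)$ and $\int t^2 g = 1$, this reads $1/\vol(K) \ge R^2/\big(m(m+2)\big)$; using the standard property that a convex body in isotropic position satisfies $\vol(K) \ge 1$, we conclude $R = R_K(K) \le \sqrt{m(m+2)} < m+1$. (An essentially equivalent route avoids the abstract extremality: the inscribed cone $\mathcal{Q} := \conv\big(\{p\}\cup(K\cap\be_1^{\perp})\big) \subseteq K$ satisfies $\vol(\mathcal{Q}) = \tfrac{R}{m}\,g(0)$ and, by the same Beta-integral, $\int_{\mathcal{Q}} x_1^2\,dx = \tfrac{2R^2}{(m+1)(m+2)}\vol(\mathcal{Q})$; since $x_1^2 \ge 0$ this bounds the central slice, $g(0) \le \tfrac{m(m+1)(m+2)}{2R^3}$, and one closes the argument with a matching lower bound on $g(0)$ valid for isotropic bodies.)

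The step I expect to be the main obstacle is the one-dimensional extremality: proving that no admissible profile beats the cone. I would handle it by a rearrangement argument --- starting from an arbitrary admissible $g$, repeatedly transport mass towards the origin in a way that keeps the right endpoint at $R$, keeps the barycentre at $0$, and preserves the concavity of $g^{1/(m-1)}$; each such step cannot increase $\int t^2 g / \int g$, and the procedure converges to the cone peaked at $R$. Verifying that concavity of the $(m-1)$-st root survives these moves, and that the barycentre stays pinned throughout, is where the real work lies; the remaining ingredients --- the Beta-function evaluation and the lower bound $\vol(K)\ge 1$ for bodies in isotropic position --- are routine.
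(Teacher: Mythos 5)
Your proposal takes a genuinely different route: you try to re-derive a Kannan--Lov{\'a}sz--Simonovits-type inertia bound from scratch (Brunn--Minkowski concavity of the marginal $g$, plus extremality of the cone profile), whereas the paper simply invokes a black-box circumradius bound for isotropic bodies (Theorem~1.2.4 of the cited notes) and rescales. Your route is heavier --- the extremality of $g_\star(t)=c(R-t)^{m-1}$ among admissible marginals is essentially the content of the KLS inertia theorem itself, and the rearrangement sketch you give would have to be carried out in full, as you acknowledge --- but that part is at least correct in principle.

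The genuine gap is in the final step, where you invoke ``the standard property that a convex body in isotropic position satisfies $\vol(K)\ge 1$.'' This is not a standard fact under the paper's normalization, which is $\int_K\dotp{x,u}^2\,dx=1$ (moment of inertia one), not the covariance normalization $\tfrac{1}{\vol(K)}\int_K\dotp{x,u}^2\,dx=1$. In the paper's normalization, $\vol(K)\ge 1$ is equivalent to the volume-normalized isotropic constant $L_K$ being at most $1$, a strong form of Bourgain's hyperplane conjecture. Without that input, your inequality $\tfrac{1}{\vol(K)}\ge\tfrac{R^2}{m(m+2)}$ gives only $R^2\,\vol(K)\le m(m+2)$, not $R\le m+1$. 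Be aware that the paper's own proof carries the same latent dependency: it asserts $L_K=\vartheta$ for the rescaled $K'=\vartheta K$, but the change of variables actually gives $L_K=\vartheta^{(m+2)/2}$ and hence $R(K)\le(m+1)/\sqrt{\vol(K)}$, so both proofs implicitly require $\vol(K)\ge 1$; yours simply labels it ``standard'' when it is not.
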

\begin{proof}
Consider $K$ in isotropic position, and let $K' = \vartheta K$ where $\vartheta = \vol(K)^{-1/m}$, so that $\vol(K')=1$. Then, $K'$ is in isotropic position according to the definition of \cite{NotesCB}. In this case, \citep[Theorem~1.2.4]{NotesCB} implies $R(K') \le (m+1) L_K$, where $L_K$ is the \emph{isotropic constant} which, for all $u \in S^{m-1}$, satisfies $\int_{K'} \dotp{ x,u}^2 d x = L_K^2$. Since $K' = \vartheta K$ and $\int_{K} \dotp{ x,u}^2 d  x = 1$ by the isotropy of $K$, we have $L_K = \vartheta$. Hence $R(K') \le (m+1) \vartheta$, that is, $R(K) \le m+1$.
\end{proof}
\begin{lemma}\label{lem:normEx}
If $X$ is drawn from an $\epsilon$-uniform distribution over $K$, then $\norm{K}{\E X} \le 2\epsilon(m+1)$.
\end{lemma}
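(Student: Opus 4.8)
The plan is to reduce to the isotropic case and then control the displacement of the mean one direction at a time. First I would observe that we may assume $K$ is already in isotropic position: replacing $K$ by $f_K(K)$ and $X$ by $f_K(X)$ changes neither the quantity $\norm{K}{\E X}$ nor the property of $X$ being $\epsilon$-uniform, since affine maps preserve ratios of volumes, and $\norm{K}{\cdot}$ is defined relative to the body. In this position $\norm{K}{\cdot}=\norm{2}{\cdot}$, the center of mass of $K$ is the origin, and hence $\E_{\scU} X=\orig$, where $\scU$ is the uniform law on $K$. Writing $\scP$ for the ($\epsilon$-uniform, $K$-supported) law of $X$, it then suffices to prove $\norm{2}{\E_{\scP}X}\le 2\epsilon(m+1)$.

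The main step is an elementary estimate: for a fixed unit vector $u\in S^{m-1}$, since $\E_{\scU}\dotp{u,X}=0$ we can write
\begin{align}
\dotp{u,\E_{\scP}X}=\E_{\scP}\dotp{u,X}=\int_K \dotp{u,x}\,d\nu(x),\qquad \nu=\scP-\scU,
\end{align}
so it remains to bound the integral of the bounded linear functional $x\mapsto\dotp{u,x}$ against the signed measure $\nu$. The total variation of $\nu$ is at most $2\epsilon$: indeed $\nu(K)=0$, so by a Hahn decomposition $K=K^+\sqcup K^-$, and applying the definition of $\epsilon$-uniformity to the measurable sets $K^+$ and $K^-$ gives $|\nu|(K)=\nu(K^+)-\nu(K^-)\le 2\epsilon$. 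The functional is bounded by $\sup_{x\in K}|\dotp{u,x}|\le \sup_{x\in K}\norm{2}{x}=R(K)=R_K(K)\le m+1$, where the last inequality is exactly Lemma~\ref{lem:Rk}. Combining the two bounds yields $|\dotp{u,\E_{\scP}X}|\le (m+1)\,|\nu|(K)\le 2\epsilon(m+1)$, and taking the supremum over unit vectors $u$ gives $\norm{2}{\E_{\scP}X}\le 2\epsilon(m+1)$, which is the claim.

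I do not expect a genuine obstacle here; the only points that need a line of care are (i) checking that the reduction to isotropic position is lossless — affine invariance of $\epsilon$-uniformity, and the fact that $\E_\scU X$ becomes the origin — and (ii) making sure the dimensional factor in the final bound is the $m+1$ supplied by Lemma~\ref{lem:Rk} applied to the isotropic $K$, rather than a cruder bound on $R(K)$. Everything else is the standard observation that the integral of a uniformly bounded function against a signed measure of small total variation is small.
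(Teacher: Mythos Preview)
Your proposal is correct. It differs from the paper's argument in one place: instead of integrating the linear functional $x\mapsto\dotp{u,x}$ against the signed measure $\nu=\scP-\scU$ and bounding $|\nu|(K)\le 2\epsilon$ via a Hahn decomposition, the paper invokes the coupling characterization of total variation. Since $\sup_{K'}|\scP(K')-\scU(K')|\le\epsilon$, there is a coupling $(X,Y)$ with $Y\sim\scU$ and $\Pr(X\ne Y)\le\epsilon$, whence
\[
\norm{K}{\E X}=\norm{K}{\E[X-Y]}\le \Pr(X\ne Y)\cdot\sup_{x,y\in K}d_K(x,y)\le \epsilon\cdot 2R_K(K)\le 2\epsilon(m+1),
\]
the last step being Lemma~\ref{lem:Rk}. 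Your route is a bit more self-contained (no appeal to the coupling lemma), while the paper's is shorter and avoids the directional sup over $u$; both hinge on the same two ingredients, $d_{\mathrm{TV}}(\scP,\scU)\le\epsilon$ and $R_K(K)\le m+1$, and land on the identical constant $2\epsilon(m+1)$. Your care about the reduction to isotropic position is fine and matches what the paper does implicitly by working with $\norm{K}{\cdot}$.
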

\begin{proof}
Since $X$ is $\epsilon$-uniform over $K$, there exists a coupling $(X,Y)$ with $\Pr(X \ne Y) \le \epsilon$ and $Y$ uniform over $K$. Since $\norm{K}{\E Y}=0$, we have:
\begin{align}
 \norm{K}{\E X} = \norm{K}{\E[X - Y]} \le \Pr(X \ne Y) \!\sup_{ x, y \in K}\!\!\!d_K( x, y) \le \epsilon\, 2R_K(K) \le 2 \epsilon (m+1)
\end{align}
where the last inequality is given by Lemma~\ref{lem:Rk}.
\end{proof}
\begin{proof}[Proof of Lemma~\ref{lem:centroid_approx}]
For the sake of the analysis we look at $K$ from its isotropic position. Note that the $X_i$ are still $\epsilon$-uniform over $K$, since the affine map that places $K$ in isotropic position preserves volume ratios. The claim becomes:
\begin{align}
    \Pr\big(\norm{2}{\bar X} \le \eta\big) \ge 1-p
\end{align}
Now, $\norm{2}{\bar X} \le \norm{2}{\E \bar X} + \norm{2}{\bar X - \E \bar X}$. Thus, we show that $\norm{2}{\E \bar X} \le \frac{\eta}{2}$, and that $\norm{2}{\bar X - \E \bar X} \le \frac{\eta}{2}$ with probability at least $1-p$. For the first part, by Lemma~\ref{lem:normEx}, and since $\epsilon \le \frac{\eta}{4(m+1)}$, we obtain:
\begin{align}
    \norm{2}{\E \bar X} = \norm{2}{\E X_i} \le 2\epsilon(m+1) \le \frac{\eta}{2}
\end{align}
For the second part, by Lemma~\ref{lem:Rk} we have $\norm{2}{X_i} \le m+1$ for all $i$. Therefore, if we let $Y_i=X_i-\E X_i$ for all $i$, we have $\norm{2}{Y_i} \le 2(m+1)$ and thus $\norm{2}{Y_i}^2 \le 4(m+1)^2$. Now let $\bar Y = \frac{1}{N}\sum_{i=1}^N Y_i$. Since the $Y_i$ are independent and with $\E Y_i = 0$, then $\E \dotp{Y_i,Y_j}=0$ whenever $i \ne j$ and therefore:
\begin{align}
    \E\norm{2}{\bar Y}^2 =
    \E\bigg(\frac{1}{N^2}\sum_{i,j=1}^N\dotp{Y_i,Y_j}\bigg) =
    \frac{1}{N^2} \sum_{i=1}^N \E \norm{2}{Y_i}^2  \le \frac{4(m+1)^2}{N}
\end{align}
Plugging in our value $N \ge \frac{16(m+1)^2}{p^2 \eta^2}$, and using Jensen's inequality, we obtain:
\begin{align}
    \big(\E\norm{2}{\bar Y}\big)^2 \le \E\norm{2}{\bar Y}^2 \le \frac{p^2 \eta^2}{4}
\end{align}
Therefore $\E\norm{2}{\bar Y} \le \frac{p \eta}{2}$, which by Markov's inequality implies that $\Pr\big(\norm{2}{\bar Y} > \frac{\eta}{2}\big) < p$. By noting that $\bar Y = \bar X - \E \bar X$, the proof is complete.
\end{proof}

\subsubsection*{Step 2: showing a tight enclosing polytope}
We prove:
\begin{lemma}\label{lem:Pexists}
Let $z \in \R^m$ such that $d_K(z,\mu_K) \le \frac{1}{e}-\frac{1}{3}$. For any $\vartheta \in (0,1)$ there exists a polytope $P$ on $t = \scO\big( \frac{m^2}{\vartheta (1-\vartheta)^m}\big)$ vertices such that $K \subseteq P \subseteq \frac{K}{\vartheta}$, where the scaling is about $z$.
\end{lemma}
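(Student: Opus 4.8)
The plan is to reduce to the case $z=\mu_K$, which is essentially a theorem of \citet{Naszodi2018}, and then handle a nearby $z$ by replacing Grünbaum's inequality with the extension of \citet{BV04-ConvexRW}. Throughout I may assume $K$ is in isotropic position, since $d_K$, $\mu_K$, volume ratios, convexity, and the relation ``$K\subseteq P\subseteq\tfrac1\vartheta K$ about $z$'' are invariant under invertible affine maps; thus $\mu_K=\orig$ and $\norm2{z}=d_K(z,\mu_K)\le\frac1e-\frac13$, and in particular $z\in K$ (an isotropic body contains a Euclidean ball about its centroid of radius exceeding $\frac1e-\frac13$), so $K\subseteq Q:=z+\tfrac1\vartheta(K-z)$ because $(1-\vartheta)z+\vartheta x\in K$ for every $x\in K$ by convexity. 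The polytope will be built by random sampling: draw $t$ points independently and uniformly from $Q$, call this set $V$, and set $P=\conv(V)$. Then $P\subseteq Q$ automatically, while $K\subseteq P$ holds exactly when $h_K(u)\le\max_{v\in V}\dotp{v,u}$ for every $u\in S^{m-1}$, i.e.\ when $V$ meets every \emph{cap} $C_u:=Q\cap\{x:\dotp{x,u}\ge h_K(u)\}$, where $h_K(u)=\sup_{x\in K}\dotp{x,u}$.

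Two facts then close the argument. First, $\{C_u:u\in S^{m-1}\}$ is a subfamily of the intersections of $Q$ with a halfspace, hence has VC dimension $\scO(m)$. Second, and this is the crux, every cap satisfies $\vol(C_u)\ge\epsilon_0\vol(Q)$ with $\epsilon_0=\Omega\big(\vartheta(1-\vartheta)^m\big)$. Granting this, the $\epsilon$-net theorem yields that $t=\scO\big(\epsilon_0^{-1}m\log(1/\epsilon_0)\big)=\scO\big(m^2/(\vartheta(1-\vartheta)^m)\big)$ uniform samples from $Q$ form, with probability at least $\tfrac12$, an $\epsilon_0$-net of this family; any such outcome meets every $C_u$, which proves the existence of $P$ with the stated number of vertices. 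For $z=\mu_K$ this is exactly the scheme of \citet{Naszodi2018}; what remains is the cap estimate for $z$ only close to $\mu_K$.

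For the cap estimate, fix $u$. Along direction $u$, $C_u$ is the top slab of $Q$ of depth $h_Q(u)-h_K(u)=\tfrac{1-\vartheta}{\vartheta}h_{K-z}(u)$ below the apex, whereas $Q$ has total extent $\tfrac1\vartheta w_K(u)$ in that direction, $w_K$ being the width of $K$. By Brunn's theorem the section-area function $h\mapsto A(h)^{1/(m-1)}$ of $Q$ along $u$ is concave, so the section of $Q$ at level $h_K(u)$ has area at least $(1-\vartheta)^{m-1}$ times that of the central section (since $h_K(u)$ is a $(1-\vartheta)$-fraction of the way from $\mu_K$ to the apex), and coning that section to the apex gives $\vol(C_u)\ge\tfrac1m\cdot\tfrac{1-\vartheta}{\vartheta}h_{K-z}(u)\cdot A(h_K(u))$. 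To convert this into a lower bound against $\vol(Q)$ with only absolute (not $m^{-\Theta(m)}$) constants, I use that for an isotropic body the central sections, the maximal section, $w_K(u)$, and $h_{K-z}(u)$ are pairwise comparable up to absolute constants: this is Grünbaum's inequality (controlling $h_K(u)/w_K(u)$) together with the Hensley/Ball comparison of central sections. When $z=\mu_K$ this is Naszódi's computation verbatim. For a general $z$ the hyperplanes through $z$ are no longer central, so I replace Grünbaum's inequality by the extension of \citet{BV04-ConvexRW}: a halfspace whose bounding hyperplane lies at isotropic distance at most $\frac1e-\frac13$ from $\mu_K$ still contains at least a $\tfrac13$-fraction of $\vol(Q)$ (the threshold $\frac1e-\frac13$ is picked precisely so this holds). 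The same chain of inequalities then gives $\vol(C_u)\ge\epsilon_0\vol(Q)$ uniformly in $u$; substituting $\vartheta=\tfrac1{1+\gamma}$ turns the bound on $t$ into $\scO\big(m^2(1+\nicefrac1\gamma)^m\big)$, matching the statement.

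\textbf{Main obstacle.} The delicate point is the cap volume estimate with the correct constants: a careless cone-plus-Brunn argument loses a factor like $(m+1)^{-m}$ when it compares $A(h_K(u))$ with the maximal section and the widths of $Q$ in different directions, and that would inflate $t$ by $(m+1)^m$ and ruin the bound. Avoiding this loss is exactly why the lemma is phrased for $K$ in isotropic position (where central sections are comparable up to absolute constants) and why the hypothesis on $z$ is stated through $d_K(z,\mu_K)$ with the specific threshold $\frac1e-\frac13$ (so the Bárány--Vempala extension of Grünbaum's inequality still yields a constant-fraction cut when we scale about $z$ rather than $\mu_K$). The VC-dimension bound, the use of the $\epsilon$-net theorem, and the final substitution are routine.
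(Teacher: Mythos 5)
Your overall strategy --- reduce to isotropic position, sample random points, argue via VC dimension / $\epsilon$-nets that the sampled hull meets every cap, and invoke the Bárány--Vempala extension of Grünbaum's inequality to handle $z\neq\mu_K$ --- matches the paper's in spirit: both amount to adapting Naszódi's polytope-approximation construction. However, the crux, the cap-volume estimate, is handled very differently, and this is where your sketch has a real gap. The paper proves (its Lemma~\ref{lem:cutvol}) that for a halfspace $F$ supporting $\vartheta K$ (scaled about $z$) from outside, $\vol(K\cap F)\ge\vol(K)(1-\vartheta)^m\bigl(\tfrac1e-d_K(\mu_K,z)\bigr)$, by a short homothety argument: take the translate $F_0$ of $F$ through $z$, set $K_0=K\cap F_0$, and note that the homothet $K_0'=\vartheta p+(1-\vartheta)K_0$ (with $p$ an outer support point of $K$) sits inside $K\cap F$; the $(1-\vartheta)^m$ appears directly from the scaling, and Bárány--Vempala is applied once to $\vol(K_0)$. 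This is affine-invariant and needs no comparison of sections. Your cone-plus-Brunn route, by contrast, produces a lower bound of the form $\tfrac1m\cdot\tfrac{1-\vartheta}{\vartheta}h_{K-z}(u)\cdot A(h_K(u))$ and then hopes to cancel the $\tfrac1m$ and compare $A(h_K(u))$, the maximal section, the width, and $\vol(Q)$ via Hensley/Ball-type isotropic estimates. You correctly flag this as the delicate point, but you do not actually carry it out; the paper's proof shows it is unnecessary, since the homothety step avoids all section comparisons. Moreover your claimed $\epsilon_0=\Omega\bigl(\vartheta(1-\vartheta)^m\bigr)$ has an extra $\vartheta$ relative to the paper's $(1-\vartheta)^m/3$, and is asserted rather than derived.

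A second, smaller gap: you write $t=\scO\bigl(\epsilon_0^{-1}m\log(1/\epsilon_0)\bigr)=\scO\bigl(m^2/(\vartheta(1-\vartheta)^m)\bigr)$, but expanding $\log(1/\epsilon_0)$ leaves an $m\log\tfrac{1}{1-\vartheta}$ factor (plus a $\log\tfrac1\vartheta$ term), and $\log\tfrac{1}{1-\vartheta}$ is not $\scO(1/\vartheta)$ as $\vartheta\to1$. The paper absorbs this by choosing the constant $C$ in Naszódi's theorem to scale as $\Theta(\tfrac1\vartheta\ln\tfrac1\vartheta)$ and then invoking the elementary inequality $\ln(1+\nicefrac1x)\ln(1+x)<1$; without some such manipulation the $\epsilon$-net count does not simplify to the stated $t$. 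Neither gap is conceptually fatal, but as written the proposal does not establish the cap estimate and hence does not establish the lemma.
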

For the proof, we need some ancillary results.
\begin{theorem}[\cite{BV04-ConvexRW}, Theorem 3]
\label{thm:Grunbaum_general}
Let $K$ be a convex set in isotropic position and $z$ be a point at distance $t$ from its centroid. Then any halfspace containing $z$ contains at least $\frac{1}{e}-t$ of the volume of $K$.
\end{theorem}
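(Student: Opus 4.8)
The strategy is to collapse the statement to a one-dimensional fact about the marginal of the uniform measure on $K$ in the direction normal to the halfspace, separating the two effects at work: the classical Grunbaum bound of $\tfrac1e$ at the centroid, and the volume lost when the cut is slid from the centroid to $z$. Let $\mu_K$ be the centroid of $K$, which is the origin since $K$ is in isotropic position, and write the halfspace as $H=\{x:\dotp{x,u}\ge a\}$ with $u$ the inward unit normal and $a=\dotp{z,u}$. We may assume $z$ lies on the bounding hyperplane of $H$: replacing $H$ by the parallel halfspace whose boundary passes through $z$ only removes points from $H\cap K$, so the boundary case implies the general one. Define the slice function $g(s)=\vol_{m-1}\!\big(K\cap\{x:\dotp{x,u}=s\}\big)$ and the probability density $h=g/\vol(K)$, so that $\vol(H\cap K)/\vol(K)=\int_{a}^{\infty}h(s)\,ds$; the isotropy of $K$ gives $\int s\,h(s)\,ds=0$ and a normalized second moment $\int s^{2}h(s)\,ds=1$.

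The plan then rests on two ingredients about $h$. First, \emph{Grunbaum at the centroid}: $\int_{0}^{\infty}h(s)\,ds\ge\tfrac1e$. The classical route is to observe that $g^{1/(m-1)}$ is concave on its support by Brunn--Minkowski, and that among all densities with this concavity and a prescribed first moment the minority fraction on one side of the mean is minimized by the extremal ``cone'' profile $g(s)\propto(s_{+}-s)^{m-1}$, for which a direct computation yields the fraction $(\tfrac{m}{m+1})^{m}\ge\tfrac1e$; alternatively one passes through the localization lemma to a mean-zero log-concave density on an interval, whose extremal case is the one-sided exponential $s\mapsto e^{-s-1}$ on $[-1,\infty)$, of mass exactly $e^{-1}$ on $[0,\infty)$. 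Second, a \emph{pointwise bound}: $\|h\|_{\infty}\le 1$. Here $h$ is log-concave, being the marginal of the uniform (hence log-concave) measure on $K$ by the Pr\'ekopa--Leindler inequality, and any log-concave density on $\R$ with mean $0$ and variance $1$ satisfies $\|h\|_{\infty}\le 1$, the extremum being again the one-sided exponential. Granting these, if $a\le 0$ then $H\supseteq\{\dotp{x,u}\ge 0\}$ and the fraction is $\ge\tfrac1e\ge\tfrac1e-t$, while if $a>0$ then
\[
\frac{\vol(H\cap K)}{\vol(K)}=\int_{a}^{\infty}h(s)\,ds=\int_{0}^{\infty}h(s)\,ds-\int_{0}^{a}h(s)\,ds\ge\frac1e-a\,\|h\|_{\infty}\ge\frac1e-a,
\]
and $a=\dotp{z-\mu_K,u}\le\norm{2}{z-\mu_K}=t$ by Cauchy--Schwarz, giving the claimed $\tfrac1e-t$.

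The main obstacle is the first ingredient, i.e. the passage from the $m$-dimensional cut to the one-dimensional inequality: the Brunn--Minkowski concavity of $g^{1/(m-1)}$ together with the variational ``symmetrization to the extremal cone'' (or, equivalently, setting up the localization lemma) is classical but genuinely delicate, and is where essentially all the work lies; the pointwise bound $\|h\|_\infty\le 1$ is comparatively soft. A secondary point is bookkeeping around normalization: the coefficient $1$ in front of $t$ is exactly what the unit-variance calibration of ``isotropic position'' purchases, so the proof must keep that calibration in force throughout, since a different normalization of the inertia would rescale both $\|h\|_{\infty}$ and the coefficient of $t$ by the corresponding factor.
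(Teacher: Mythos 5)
This statement is not proved in the paper at all: it is imported verbatim as Theorem~3 of \citet{BV04-ConvexRW}, so there is no in-paper argument to compare against. Your plan is essentially a correct reconstruction of the standard Bertsimas--Vempala proof: pass to the one-dimensional marginal $h$ of the uniform probability measure on $K$ in the direction of the inward normal, note $h$ is log-concave (Pr\'ekopa, or Brunn--Minkowski applied to $g^{1/(m-1)}$), apply Gr\"unbaum's $(\tfrac{m}{m+1})^m\ge\tfrac1e$ bound at the mean, and charge the slab between the centroid hyperplane and the hyperplane through $z$ at most $a\,\|h\|_\infty\le a\le t$, using the fact that a one-dimensional log-concave density with unit variance has sup at most $1$ (tight for the one-sided exponential; this is the Lov\'asz--Vempala-type lemma, and while you call it ``soft'' it does need its own short proof or citation). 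The reduction to the boundary-through-$z$ halfspace and the Cauchy--Schwarz step $a=\langle z,u\rangle\le t$ are both fine, as is the $a\le 0$ case.

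One calibration point worth making explicit, since you flag it yourself: your argument needs the uniform \emph{probability} measure on $K$ to have unit covariance in every direction, which is the isotropy convention of \citet{BV04-ConvexRW}. The paper's own Definition of isotropic position instead normalizes the unnormalized moment of inertia, $\int_K\langle x,u\rangle^2\,dx=1$; under that literal convention the marginal has variance $1/\vol(K)$ and your slab term becomes $a\sqrt{\vol(K)}$ rather than $a$, so the ``$1/e-t$'' form is tied to the covariance normalization. This is a discrepancy in the paper's bookkeeping rather than a gap in your proof, but if your argument were spliced in, you should state which normalization of ``isotropic'' you use and keep it consistent with how $d_K$ and Lemma~\ref{lem:centroid_approx} are applied downstream.
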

Now we adapt a result by \citet{Naszodi2018}, recalled here for convenience. We say that a halfspace $F$ supports a convex body from outside if $F$ intersects the boundary of the body, but not its interior.
\begin{lemma}[Lemma 2.1, \cite{Naszodi2018}]
Let $0 < \vartheta < 1$, and $F$ be a halfspace that supports $\vartheta K$ from outside, where the scaling is about $\mu_K$. Then:
\begin{align}
    \vol(K \cap F) \ge \vol(K) \cdot (1-\vartheta)^m \frac{1}{e}
\end{align}
\end{lemma}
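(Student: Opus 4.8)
The plan is to derive the cap-volume bound from the classical Grünbaum inequality by a single homothety (contraction) argument. First I would normalize. Since the statement is invariant under invertible affine maps — these preserve centroids, ratios of volumes, the homothety ratio $\vartheta$, and send halfspaces to halfspaces — I may assume without loss of generality that $\mu_K = \orig$ and, if convenient, that $K$ is in isotropic position, so that Theorem~\ref{thm:Grunbaum_general} applies to $K$ verbatim.

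Next I would make $F$ explicit. Because $F$ supports $\vartheta K$ from outside and the scaling is about $\orig$, there is a unit vector $u \in S^{m-1}$ such that the supporting hyperplane of $\vartheta K$ with outer normal $u$ is $\{x : \dotp{x,u} = \vartheta h\}$, where $h = \max_{x \in K}\dotp{x,u}$ is the value of the support function of $K$ (not of $\vartheta K$) in direction $u$; note $h > 0$ since $\orig$ is interior to $K$. Then $F$ is the closed halfspace on the far side, $F = \{x : \dotp{x,u} \ge \vartheta h\}$. I would fix any $p \in K$ attaining $\dotp{p,u} = h$.

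The core step is the homothety $\psi(x) = (1-\vartheta)\,x + \vartheta\,p$, i.e.\ the contraction toward $p$ with ratio $1-\vartheta \in (0,1)$. Since $p \in K$ and $K$ is convex, $\psi(K) \subseteq K$. A one-line computation gives $\dotp{\psi(x),u} = (1-\vartheta)\dotp{x,u} + \vartheta h$, so $\dotp{x,u} \ge 0$ forces $\dotp{\psi(x),u} \ge \vartheta h$; hence $\psi$ maps the ``centroid half'' $K^{+} := K \cap \{x : \dotp{x,u} \ge 0\}$ into $K \cap F$. As $\psi$ is an affine bijection whose linear part is $(1-\vartheta) I$, it scales volume by $(1-\vartheta)^m$, so
\[
\vol(K \cap F) \;\ge\; \vol\big(\psi(K^{+})\big) \;=\; (1-\vartheta)^m\,\vol(K^{+}).
\]
Finally, $K^{+}$ is the intersection of $K$ with a halfspace whose bounding hyperplane passes through the centroid $\orig$, so Theorem~\ref{thm:Grunbaum_general} applied with $z = \mu_K$ (distance $0$ from the centroid) — equivalently, Grünbaum's inequality — yields $\vol(K^{+}) \ge \tfrac{1}{e}\vol(K)$, and combining the two displays gives $\vol(K\cap F) \ge \tfrac{1}{e}(1-\vartheta)^m\vol(K)$, as claimed.

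I don't expect a genuine obstacle here; the only points that need care are (i) identifying $F$ as $\{\dotp{x,u}\ge \vartheta h\}$ with $h$ the support-function value of $K$, so that the arithmetic inside $\psi$ produces exactly the threshold $\vartheta h$, and (ii) invoking affine invariance so that Grünbaum/Theorem~\ref{thm:Grunbaum_general} can be used even when $K$ is not already isotropic. The one mildly non-obvious idea is the choice of contraction center: taking the extreme point $p$ of $K$ in the direction $u$ (rather than, say, $\mu_K$) is precisely what converts the ``$\ge 1/e$'' for a centroid halfspace into the shifted halfspace $F$ with the clean multiplicative loss $(1-\vartheta)^m$.
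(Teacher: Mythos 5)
Your proof is correct and is essentially the argument the paper uses (following \citet{Naszodi2018}) for its adapted version, Lemma~\ref{lem:cutvol}: your contraction $\psi$ toward the extreme point $p$ with ratio $1-\vartheta$ is exactly the homothety $K_0' = \vartheta p + (1-\vartheta)K_0$ there, your coordinate computation $\dotp{\psi(x),u} = (1-\vartheta)\dotp{x,u} + \vartheta h$ is the identity $F = \vartheta p + (1-\vartheta)F_0$, and the final step is the same appeal to Grünbaum (Theorem~\ref{thm:Grunbaum_general} with $t=0$) after an affine normalization. No gaps.
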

Our adaptation is:
\begin{lemma}\label{lem:cutvol}
Let $z \in \R^m$, let $0 < \vartheta < 1$, and let $F$ be a half-space that supports $\vartheta K$ from outside, where the scaling is about $z$. Then:
\begin{align}
    \vol(K \cap F) \ge \vol(K) \cdot (1-\vartheta)^m \left(\frac{1}{e} - d_K(\mu_K,z) \right)
\end{align}
\end{lemma}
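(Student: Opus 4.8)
The plan is to mirror the proof of Lemma~2.1 of~\cite{Naszodi2018} with two modifications: the dilation is performed about the arbitrary point $z$ rather than about $\mu_K$, and plain Grünbaum's inequality is replaced by its quantitative version for halfspaces through an off-centroid point, namely Theorem~\ref{thm:Grunbaum_general}. First I would normalize: applying the affine map $f_K$ that brings $K$ into isotropic position, simultaneously to $K$, $z$ and $F$, leaves volume ratios unchanged, leaves $d_K(\mu_K,z)$ unchanged (it becomes the ordinary Euclidean distance $\norm{2}{z-\mu_K}$, by definition of $\norm{K}{\cdot}$), and turns the hypothesis ``$F$ supports $\vartheta K$ about $z$ from outside'' into the same statement for $f_K(K)$ about $f_K(z)$ and a new halfspace. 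So assume henceforth that $K$ is in isotropic position with $\mu_K=\orig$, and write $t=d_K(\mu_K,z)=\norm{2}{z}$.

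Next I would pin down $F$. Since $F$ supports $\vartheta K$ — i.e.\ the dilate $\{(1-\vartheta)z+\vartheta y : y\in K\}$ — from outside, its bounding hyperplane is tangent to $\vartheta K$ while $\operatorname{int}(\vartheta K)$ lies strictly on the opposite side; hence, writing $u$ for the outward unit normal, $F=\{x:\dotp{x,u}\ge b\}$ with $b=\max_{x\in\vartheta K}\dotp{x,u}$. Let $c_0\in K$ attain $h:=\max_{x\in K}\dotp{x,u}$ (it exists since $K$ is compact); then $b=(1-\vartheta)\dotp{z,u}+\vartheta h$. Introduce the auxiliary halfspace $F'':=\{x:\dotp{x,u}\ge\dotp{z,u}\}$, which is parallel to $\partial F$ and has $z$ on its boundary.

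The core step is to fit a homothetic copy of $K\cap F''$ of ratio $1-\vartheta$ inside $K\cap F$. Take $\phi(x)=c_0+(1-\vartheta)(x-c_0)=(1-\vartheta)x+\vartheta c_0$. For any $x\in K\cap F''$ we have $\phi(x)\in K$, since it is a convex combination of $x\in K$ and $c_0\in K$, and $\phi(x)\in F$, since $\dotp{\phi(x),u}=(1-\vartheta)\dotp{x,u}+\vartheta h\ge(1-\vartheta)\dotp{z,u}+\vartheta h=b$ using $\dotp{x,u}\ge\dotp{z,u}$. Hence $\phi(K\cap F'')\subseteq K\cap F$, and since $\phi$ is affine with linear part $(1-\vartheta)I$ it scales volume by $(1-\vartheta)^m$, so
\begin{align}
\vol(K\cap F)\;\ge\;(1-\vartheta)^m\,\vol(K\cap F'').
\end{align}
Because $F''$ is a halfspace containing $z$, Theorem~\ref{thm:Grunbaum_general} gives $\vol(K\cap F'')\ge\big(\tfrac1e-d_K(\mu_K,z)\big)\vol(K)$; substituting this proves the lemma. (When $\tfrac1e-d_K(\mu_K,z)<0$ the claimed inequality is vacuous, since $\vol(K\cap F)\ge0$, so nothing further is needed; in particular this also recovers the bound $\vol(K)(1-\vartheta)^m/e$ of~\cite{Naszodi2018} when $z=\mu_K$.)

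I do not anticipate a genuine obstacle: the only ideas one has to spot are (i) dilating about the supporting point $c_0$ of $K$ in the normal direction $u$, rather than about $\mu_K$ or $z$, and (ii) choosing $F''$ to pass through $z$ so that Theorem~\ref{thm:Grunbaum_general} applies verbatim. The points requiring care are extracting the correct orientation of $F$ from the ``supports from outside'' definition (so that $K\cap F$ is the large cap, consistent with the $\vartheta\to1$ and $\vartheta\to0$ limits), and checking that the isotropic normalization leaves both sides of the inequality and all hypotheses invariant; both are routine.
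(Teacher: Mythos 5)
Your proof is correct and follows essentially the same route as the paper's: your homothety center $c_0$ is the paper's point $p$ on the supporting hyperplane of $K$ in the normal direction, your halfspace $F''$ through $z$ is the paper's $F_0$, and both proofs reduce to applying Theorem~\ref{thm:Grunbaum_general} to the cap $K\cap F''$ after scaling by $(1-\vartheta)^m$. You spell out the inner-product computations that the paper leaves to its figure, but the argument is the same.
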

\begin{proof}
The proof is similar to the proof of the original lemma. See Figure~\ref{fig:cutvol} for reference. Let $F_{0}$ be a translate of $F$ whose boundary contains $z$, and let $K_0 = K \cap F_0$. Let $F_1$ be a translate of $F$ that supports $K$ from outside, and let $p \in F_1 \cap K$. Now consider $K_0' = \vartheta p + (1-\vartheta)K_0$, that is, the homothetic copy of $K_0$ with center $p$ and ratio $1-\vartheta$. The crucial observation is that $F = \vartheta p + (1-\vartheta)F_0$, which implies $K_0' \subset K \cap F$. Clearly $\vol(K_{0}') = (1-\vartheta)^m \vol(K_0)$. Moreover, by Theorem~\ref{thm:Grunbaum_general} we have $\vol(K_0) \ge \vol(K)(\nicefrac{1}{e} - t)$ where $t=d_K(\mu_K,z)$; this holds because mapping $K$ to its isotropic position preserves volume ratios. This concludes the proof.
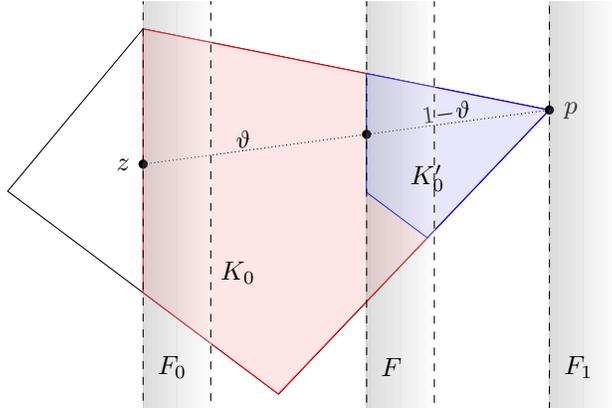
\begin{figure}[h!]
    \centering
    \begin{tikzpicture}[scale=1.8,every node/.style={circle,inner sep=0pt,minimum size=3.5}]
\tikzset{
  cdc/.style={every coordinate/.try},
  halfspace/.style={draw, dashed, left color=gray, right color=white, path fading=east, fill opacity=.2}
}
\def\thet{.55} 

\clip (-1.1,-1.8) rectangle (3.5,1.2);

\coordinate (orig) at (0,0);

\coordinate (a) at (-1,-.2);
\coordinate (b) at (0,1);
\coordinate (c) at (3,.4);
\coordinate (d) at (1,-1.7);

\coordinate (p) at (c); 
\coordinate (orig1) at ($(orig)!{\thet}!(p)$); 
\coordinate (adF0) at (intersection cs: first line={(a)--(d)}, second line={(orig)--($(orig)+(0,-3)$)});

\draw (a) -- (b) -- (c) -- (d) -- cycle;

\coordinate (a1) at ($\thet*(a)$);
\coordinate (b1) at ($\thet*(b)$);
\coordinate (c1) at ($\thet*(c)$);
\coordinate (d1) at ($\thet*(d)$);
\coordinate (adF01) at ($\thet*(adF0)$);
\begin{scope}[every coordinate/.style={scale={.999}}]
    \path[draw=red,fill=red!10] ([cdc]b) -- ([cdc]c) -- ([cdc]d) -- ([cdc]adF0) -- cycle;
\end{scope}

\coordinate (a2) at ($(a)!{\thet}!(p)$);
\coordinate (b2) at ($(b)!{\thet}!(p)$);
\coordinate (c2) at ($(c)!{\thet}!(p)$);
\coordinate (d2) at ($(d)!{\thet}!(p)$);
\coordinate (adF02) at ($(adF0)!{\thet}!(p)$);
\begin{scope}[every coordinate/.style={scale={(.999)}}]
    \path[draw=blue,fill=blue!10] ([cdc]b2) -- ([cdc]c2) -- ([cdc]d2) -- ([cdc]adF02) -- cycle;
\end{scope}

\node[fill] (z) at (orig) {};
\node[fill] (z1) at (orig1) {};
\draw[densely dotted] (orig) -- (p);
\path (orig) to node[above,sloped,near start]{\small$\vartheta$} (p);
\path (orig) to node[above=-6pt,sloped,near end]{\small$1\!-\!\vartheta$} (p);
\node[fill] (cnode) at (c) {};
\node[right=4pt] (ctxt) at (c) {$p$};
\node[left=4pt] (ztxt) at (z) {$z$};

\shade[halfspace] ($(orig)+(0,-5)$) rectangle ++(.5,10);
\node[right=4pt] (F0) at (orig |- 1,-1.5) {$F_0$};
\shade[halfspace] ($(orig1)+(0,-5)$) rectangle ++(.5,10);
\node[right=4pt] (F0) at (orig1 |- 1,-1.5) {$F$};
\shade[halfspace] ($(p)+(0,-5)$) rectangle ++(.5,10);
\node[right=4pt] (F0) at (p |- 1,-1.5) {$F_1$};

\node (K0) at (.7,-.8) {$K_0$};
\node (K01) at (2.1,-.1) {$K_0'$};

\end{tikzpicture}
    \caption{a visual proof of Lemma~\ref{lem:cutvol}, with $d(z,p)=1$ for simplicity.}
    \label{fig:cutvol}
\end{figure}
\end{proof}
\begin{proof}[Proof of Lemma~\ref{lem:Pexists}]
We adapt the construction behind \citep[Theorem 1.2]{Naszodi2018}, by replacing $\varepsilon = \frac{(1-\vartheta)^m}{e}$ with $\varepsilon = \frac{(1-\vartheta)^m}{3}$.
The theorem then says that, if we set:
\begin{align}
    t = \left\lceil C \frac{(m+1)3}{(1-\vartheta)^m}\ln \frac{3}{(1-\vartheta)^m} \right\rceil 
\end{align}
and we let $X_1,\ldots, X_t$ be $t$ points chosen independently and uniformly at random from $K$, and let $P=\conv(X_1,\ldots,X_t)$, then $\vartheta K \subseteq P \subseteq K$ with probability at least $1-p$, where
\begin{align}
    p = 4 \left(11 C^2 \left(\frac{(1-\vartheta)^m}{3}\right)^{\!\!C-2}\right)^{m+1}
\end{align}
Now, we choose $C = \Theta(\frac{1}{\vartheta}\ln \frac{1}{\vartheta})$ large enough.
On the one hand, we obtain:
\begin{align}
    t = \left\lceil C \frac{(m+1)3}{(1-\vartheta)^m}\ln \frac{3}{(1-\vartheta)^m} \right\rceil
    = \scO\left( \frac{m^2}{\vartheta (1-\vartheta)^m} \ln \frac{1}{\vartheta} \ln \frac{1}{1-\vartheta}  \right)
\end{align}
Since $\vartheta \in (0,1)$, we have $\ln \frac{1}{\vartheta} \ln \frac{1}{1-\vartheta}  = \ln(1+\nicefrac{1}{x})\ln(1+x)$ where $x=\frac{\vartheta}{1-\vartheta} > 0$. However, $\ln(1+\nicefrac{1}{x})\ln(1+x) < 1$ for all $x > 0$. Therefore, $t \in \scO\big( \frac{m^2}{\vartheta (1-\vartheta)^m}\big)$. On the other hand, by setting $C$ large enough we can make $C^2 \big(\frac{(1-\vartheta)^m}{3}\big)^{C-2}$ arbitrarily small, and therefore $p < 1$.

Since $p<1$, we conclude that \emph{there exists} a polytope $P$ on $t \in \scO\big( \frac{m^2}{\vartheta (1-\vartheta)^m}\big)$ vertices such that $\vartheta K \subseteq P \subseteq K$. To conclude, instead of $P$ simply take $\frac{P}{\vartheta}$ where the scaling is about $z$.
\end{proof}

\subsubsection*{Wrap-up}
First, by Lemma~\ref{lem:centroid_approx}, by taking $N \in \scO(m^2)$ large enough we can make $d_K(\mu_K,z) \le \frac{1}{e}-\frac{1}{3}$ with probability arbitrarily close to $1$. Now let $\scP_t$ be the family of all polytopes in $\R^m$ on at most $t$ vertices. For $t \in \scO\big( \frac{m^2}{\vartheta (1-\vartheta)^m}\big)$ large enough, Lemma~\ref{lem:Pexists} implies that there exists some $P \in \scP_t$ such that $K \subseteq P \subseteq \frac{K}{\vartheta}$.

Now we prove that, by choosing $s$ large enough, with probability arbitrarily close to $1$ we have $|\frac{K}{\vartheta} \cap C| \ge \frac{1}{2}|C|$. First, by a recent result of~\cite{Kupavskii2004-polyVCdim}, we have $\vcdim(\scP_t) \le 8 m^2 t \log_2 t$. For our $t$ this yields
\begin{align}
  \vcdim(\scP_t) = \scO\!\left( m^2 \frac{m^2}{\vartheta (1-\vartheta)^m} \ln \frac{m^2}{\vartheta (1-\vartheta)^m} \right) = \scO\!\left( \frac{m^5}{\vartheta (1-\vartheta)^m} \ln \frac{1}{\vartheta (1-\vartheta)} \right)
\end{align}
where in the last equality we used $\ln \frac{m^2}{\vartheta (1-\vartheta)^m} = \ln \frac{m^{(2/m)m}}{\vartheta (1-\vartheta)^m} \le m \ln \frac{m^{2/m}}{\vartheta (1-\vartheta)} = \scO\big(m \ln \frac{1}{\vartheta (1-\vartheta)}\big)$.
Hence, by choosing $|S_C| = s = \scO\big( \frac{m^5}{\vartheta (1-\vartheta)^m} \ln \frac{1}{\vartheta (1-\vartheta)} \big)$ large enough, for any constant $c,\epsilon,\delta$ we can make:
\begin{align}
    |S_C| \ge c\, \frac{\vcdim(\scP_t) \, \ln \frac{1}{\epsilon} + \ln \frac{1}{\delta}}{\epsilon}
\end{align}
Since $P$ is consistent with $S_C$, that is, $P \supset S_C$, then by standard PAC bounds we have $|P \cap C| \ge (1-\epsilon)|C|$ with probability at least $1-\delta$. But $P \subseteq \frac{K}{\vartheta}$, and therefore $|\frac{K}{\vartheta} \cap C| \ge (1-\epsilon)|C|$ with probability at least $1-\delta$. By adjusting the constants this yields the thesis of Lemma~\ref{lem:expanded_hull}.

\section{Proof of Theorem~\ref{thm:cool}}
\cooltheorem*
We give the pseudocode of the algorithm for reference. First, we prove the correctness and the query bound. Then we show the running time bound. Note that, for readability, the pseudocode given here is high-level; the actual implementation is more complex, see below.

\begin{algorithm}
\caption{
\label{alg:exphull}
\ExpandHull$(K,1+\gamma)$
}
\begin{algorithmic}
\State let $N = \Omega(m^2)$ large enough
\State let $\epsilon = \scO(m^{-1})$ small enough
\State draw $N$ i.i.d.\ random points $X_1,\ldots,X_N$ from any $\epsilon$-uniform distribution over $K$
\State let $z = \frac{1}{N}\sum_{i=1}^N X_i$
\State \Return $z + (1+\gamma)(K-z)$
\end{algorithmic}
\end{algorithm}

\begin{algorithm}
\caption{\CoolAlgo$(X,O,\gamma)$}
\begin{algorithmic}
\While{$X \ne \emptyset$}
\State let $s = \scO\big(m^5 \big(1+\nicefrac{1}{\gamma}\big)^{m} \ln \big(1+ \nicefrac{1}{\gamma}\big)\big)$ large enough
\State draw a uniform random sample $S$ of size $\min(|X|,ks)$ from $X$, without repetition
\State learn the labels of $S$ with $ks$ queries to $O$
\State let $S_i$ be the points of $S$ having label $i$
\For{$i=1,\ldots,k$}
\State $K = \conv(S_i)$
\State $Q = $ \ExpandHull$(K,1+\gamma)$
\State $\hat{C}_i = Q \cap X$
\State label all points of $\hat{C}_i$ with label $i$
\State $X = X \setminus \hat{C}_i$
\EndFor
\EndWhile
\end{algorithmic}
\end{algorithm}

\textbf{Correctness and query bound.} We prove that, at each round, for some $i$ we recover at least half of the points in $C_i$ with probability $1-\delta$, where $\delta$ can be made arbitrarily small by adjusting the constants. Let $S_i$ be the subset of the sample $S$ having label $i$. Since there are at most $k$ clusters and $|S| = k s$, for some $i$ we will have $|S_i| \ge s$. Now we apply Lemma~\ref{lem:expanded_hull} to $K=\conv(S_i)$. Since $s$ satisfies the hypotheses, the lemma says that $\hat{C}_i = Q \cap X$ has size $|\hat{C_i}| \ge \frac{|C_i|}{2}$ with probability arbitrarily close to $1$ (that is, with probability $1-\delta$ as above). It remains to show that $\hat{C_i} \subseteq C_i$. Let $d$ be any pseudometric that is homogeneous and invariant under translation. Then, any point $ y \in (1+\gamma)K \cap X$ satisfies $d( y, K) \le \gamma\, \diam_d(K)$. But $K = \conv(S_i) \subseteq \conv(C_i)$. Therefore $\diam_d(K) \le \diam_d(C_i)$. Hence $d( y,\conv(C_i)) \le \gamma\, \diam_d(C_i)$. By the convex margin assumption, this implies that $ y \in C_i$. This also proves the correctness of the algorithm. The total query bound follows as in Lemma 3 of~\citep{BCLP20}, whose algorithm also recovers an expected fraction $\frac{1}{4k}$ of all points in each round.

\textbf{Running time bound.}
First, we analyze the running time of \CoolAlgo\ excluding the call to \ExpandHull. Drawing and labeling the samples obviously cost $\scO(n)$ time throughout the entire execution. In the main loop, $K$ is actually not computed explicitly --- see below. Similarly, $Q$ is simply a set of points obtained by rescaling $S_i$ about some point in space. Thus, computing $\hat{C}_i = Q \cap X$ and labeling its points boils down to deciding, for all $x \in X$, if $x$ can be written as $\sum_{x_j \in Q_j} \lambda_j x_j$ for a set of coefficients $\lambda_1,\ldots,\lambda_{|Q|} \in [0,1]$. This can be done with polynomial precision using any polynomial-time solver for linear programs (say, the ellipsoid method). 

Let us now turn to \ExpandHull. The computationally expensive part is drawing $N$ points from an $\epsilon$-uniform distribution over $K=\conv(S_i)$. This can be done with any method of choice. Here, we consider the ``hit-and-run from a corner'' algorithm of~\citet{lovasz2006hit}, which implements a fast mixing random walk whose stationary distribution is uniform over any convex body. We remark that other methods for computing approximate centers exist, see for example~\citep{basu2017centerpoints}.

The implementation is as follows. First, we put $K$ in near-isotropic position by computing the minimum volume enclosing ellipsoid (MVEE) and then applying an affine transformation to make the MVEE into the ball of unit radius. As shown in~\citep{khachiyan1996rounding}, this operation takes time $|S_i|m^2\big(\ln m + \ln\ln|S_i|\big)$. After this transformation, let $\mu$ be the center of the unit ball that contains $K$. Observe that $\mu$ is at distance at least $\frac{1}{m}$ from the boundary of $K$: this holds since, by John's theorem, the ball of radius $\frac{1}{m}$ centered at $\mu$ is entirely contained in $K$. Now we execute the hit-and-run from a corner algorithm starting at $\mu$. By the results of~\citet{lovasz2006hit}, we have the following bound.
\begin{lemma}[See~\citet{lovasz2006hit}, Corollary~1.2]
Assume hit-and-run is started from $\mu$. For any $\epsilon > 0$, the distribution of the random walk after
\[
    t = \Theta\left(m^4 \ln\frac{m}{\epsilon}\right)
\]
steps is $\epsilon$-uniform over $K$.
\end{lemma}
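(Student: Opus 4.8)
The plan is to derive the statement directly from Corollary~1.2 of~\citet{lovasz2006hit}, after checking that the geometric preconditions of that result are met by the preprocessing carried out inside \ExpandHull. Recall that, before running hit-and-run, the algorithm computes the minimum volume enclosing ellipsoid of $K=\conv(S_i)$ and applies the affine map sending this ellipsoid to the unit ball centered at some point $\mu$. By John's theorem, after this transformation one has $B(\mu,\nicefrac{1}{m})\subseteq K\subseteq B(\mu,1)$; hence $K$ is sandwiched between two balls whose radii differ only by a factor $m$, and the starting point $\mu$ lies at Euclidean distance at least $\nicefrac{1}{m}$ from $\partial K$. In the terminology of~\citet{lovasz2006hit}, $\mu$ is therefore a legitimate ``corner'' from which to start the walk, and this is exactly the regime their corollary is designed to handle.

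First I would recall the precise form of the ``hit-and-run from a corner'' bound: for a convex body $L$ with $B(x_0,r)\subseteq L\subseteq B(x_0,R)$, the hit-and-run walk started at $x_0$ reaches total variation distance at most $\epsilon$ from the uniform distribution on $L$ after a number of steps polynomial in $m$, in $R/r$, and in $\ln\nicefrac{1}{\epsilon}$, with the leading behaviour scaling like $m^2(R/r)^2$ up to polylogarithmic factors. Substituting $R=1$ and $r=\nicefrac{1}{m}$ collapses $(R/r)^2$ into $m^2$, so the bound becomes of order $m^4\ln\nicefrac{m}{\epsilon}$; the algorithm simply runs the walk for $t$ of exactly this order, which is what the $\Theta(\cdot)$ in the statement records. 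Finally I would observe that hit-and-run commutes with invertible affine maps and that such maps preserve volume ratios, so the $\epsilon$-uniformity established on the transformed body transfers unchanged to the original body $K$ on which the algorithm operates; and that ``$\epsilon$-uniform'' in the sense used in this paper --- namely $\sup_{K'\subseteq K}|\scP(K')-\scU(K')|\le\epsilon$ over measurable $K'$ --- is precisely total variation distance, so there is nothing to reconcile between notions of closeness.

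The content here is bookkeeping rather than new mathematics, and the only genuine (and still mild) obstacle is matching the hypotheses of the cited corollary: verifying that $\mu$ qualifies as the required deep starting point, that the sandwiching radii $\nicefrac{1}{m}$ and $1$ are exactly the quantities that feed into the Lov\'asz--Vempala mixing bound, and that substituting them yields the clean rate $\Theta(m^4\ln\nicefrac{m}{\epsilon})$ with no hidden extra polylogarithmic factor in $m$ concealed in their constants. Once these points are confirmed, the lemma follows immediately from Corollary~1.2 of~\citet{lovasz2006hit}, and no independent probabilistic or geometric argument is required.
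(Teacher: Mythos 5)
The paper gives no independent proof of this lemma: it is stated purely as a citation of Lov\'asz--Vempala's Corollary~1.2, with the John's-theorem sandwiching $B(\mu,\nicefrac{1}{m})\subseteq K\subseteq B(\mu,1)$ (established by the MVEE preprocessing) supplied in the surrounding text as the precondition. Your proposal carries out exactly the same bookkeeping and is therefore essentially identical to the paper's treatment; the caveat you flag about a possibly hidden polylogarithmic factor in the Lov\'asz--Vempala constants is a fair concern, but one the paper glosses over as well rather than resolves.
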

It remains to implement the hit-and-run algorithm over $K$. To this end we need to solve the following problem: given a generic point $x \in K$ and a vector $u \in S^{n-1}$, determine the intersection of the ray $\{x + \alpha u\}_{\alpha \ge 0}$ with the boundary of $K$. This amounts to solving a linear program that searches for the maximum value $\alpha \ge 0$ such that $x + \alpha u$ can be written as $\sum_{x_j \in S_i} \lambda_j x_j$ for a set of coefficients $\lambda_1,\ldots,\lambda_{|S_i|} \in [0,1]$. We can solve such an LP in time $t_{K} = \poly(|S_i|,m)$ with polynomial precision using any polynomial-time solver for linear programs. The total time to draw the $N$ samples is therefore:
\begin{align}
  \scO\left(|S_i|m^2\big(\ln m + \ln\ln|S_i|\big) + N t_K\, m^4\ln\frac{m}{\epsilon}\right)
\end{align}
As we set $N=\scO(m^2)$ and $\epsilon=\scO(m^{-1})$, the total running time is:
\begin{align}
  \scO\left(|S_i|m^2\big(\ln m + \ln\ln|S_i|\big) + t_K\, m^6 \ln m\right)
\end{align}
which is in $\poly(|S_i|,m) = \poly(n,m)$.

\section{Proof of Lemma~\ref{lem:svm}}\label{apx:proof_lem_svm}
\svmlemma*
Let $u_1=(1,0)$ and $u_2=(0,1)$, and for some constant $a$ independent of $\eta$ and to be fixed later, consider the set $X$ consisting of the four points (see Figure~\ref{fig:SVMmargin}):
\begin{align}
p_1 &= \eta\, u_1, \quad q_1 = a\, u_1 \\
p_2 &= \eta\, u_2, \quad q_2 = a\, u_2
\end{align}
Finally, let $\C=(C_1,C_2)$ where $C_1=\{ p_1,q_1 \}$ and $C_2=\{ p_2,q_2\}$.

Consider the two pseudometrics $d_1=d_{(0,1)}$ and $d_2=d_{(1,0)}$. Then $\diam_{d_1}(C_1) = 0$ and $d_1(C_1,C_2)=\eta$, and vice versa, $\diam_{d_2}(C_2) = 0$ and $d_2(C_1,C_2)=\eta$. Thus, the one-versus-all margin of $\C$ with respect to $d_1,d_2$ is unbounded.

Now choose any $u \in \R^2 \setminus \orig$. Without loss of generality, by rescaling we can assume $u$ to be a unit vector. In this case, we have $d_u(C_1,C_2) \le d_u(p_1,p_2) \le \norm{2}{p_1-p_2} = \eta \sqrt{2}$. Yet, the convex hull of $X$ contains a ball of radius $\Omega(1)$, and therefore $\diam_d(X) = \Omega(1)$, where the constants depend on $a$. Hence, we can make $d(C_1,C_2) \le \eta\, \diam_d(X)$ by choosing $a$ appropriately.

\section{Proof of Theorem~\ref{thm:margin_from_proximity}}
\proxitheorem*
Consider any cluster $C_i$ with center $c_i$. We must show that any $y \in C_j$ with $j \ne i$ satisfies $d(y,x) > \frac{(\alpha-1)^2}{2(\alpha+1)} \diam_d(C_i)$ for all $x \in C_i$. Let $x' = \arg \max_{x \in C_i} d(x,c_i)$. Clearly, if $d(x',c_i)=0$ then all points of $C_i$ coincide and $\diam_d(C_i)=0$. If this is the case, then by the $\alpha$-center proximity, for any $x \in C_i$ and any $y \in C_j$ we have $d(y,x) = d(y,c_i) > \alpha\, d(y,c_j) \ge 0$. Therefore $d(y,x) > a \, \diam_d(C_i)$ for any $a > 0$, which in particular proves our thesis.

Suppose instead that $d(x',c_i) > 0$. Choose any $x \in C_i$ and any $y \in C_j$. Now, we have two cases.

\textbf{Case 1: $x=c_i$.} In this case we proceed as follows. Bear in mind that $d(x',c_i) = \diam_d(C_i)$, $d(y,c_i)=d(y,x)$, and $d(y,c_j) < \frac{1}{\alpha} d(y,c_i)$.
\begin{align}
\diam_d(C_i) &= d(x',c_i) \\
&< \frac{1}{\alpha} d(x',c_j) \\
& \le \frac{1}{\alpha} \Big( d(x',c_i) + d(c_i,y) + d(y,c_j)\Big) \\
& < \frac{1}{\alpha} \left( \diam_d(C_i) + d(y,x) +  \frac{1}{\alpha} d(y,c_i)\right) \\
& = \frac{1}{\alpha} \diam_d(C_i) + \frac{1}{\alpha} d(y,x) + \frac{1}{\alpha^2} d(y,x)
\end{align}
from which we infer
\begin{align}
d(y,x) > \frac{\alpha(\alpha-1)}{\alpha+1} \diam_d(C_i) > \frac{(\alpha-1)^2}{2(\alpha+1)} \diam_d(C_i)
\end{align}

\textbf{Case 2: $x \ne c_i$.} In this case, $d(x, c_i) > 0$, and we start by deriving:
\begin{align}
  d(y,x) &\ge d(x,c_j) - d(y,c_j)
  \\ &> d(x,c_j) - \frac{1}{\alpha}d(y,c_i)
  \\ &\ge d(x,c_j) - \frac{1}{\alpha}\big(d(y,x) + d(x,c_i)\big) 
  \\ &= -\frac{1}{\alpha}d(y,x) + d(x,c_j)  -\frac{1}{\alpha}d(x,c_i)
\end{align}
and thus
\begin{align}
  d(y,x)\left(\frac{\alpha+1}{\alpha}\right) &> d(x,c_j) -\frac{1}{\alpha}d(x,c_i)
\end{align}
which yields
\begin{align}
  d(y,x) &> \frac{\alpha}{\alpha+1} d(x,c_j) - \frac{1}{\alpha+1} d(x,c_i)\label{eq:dyx}
\end{align}

Let $\beta = \frac{d(x',c_i)}{d(x,c_i)}$. Observe that $\diam_d(C_i) \le 2 \beta d(x,c_i)$ and therefore $d(x,c_i)\ge \frac{\diam_d(C_i)}{2 \beta}$.

Now we consider two cases. First, suppose that $\beta \le \frac{\alpha+1}{\alpha-1}$. In this case, we apply $d(x,c_j)> \alpha d(x,c_i)$ to~\eqref{eq:dyx} to obtain:
\begin{align}
  d(y,x) &> \frac{\alpha^2}{\alpha+1} d(x,c_i) - \frac{1}{\alpha+1} d(x,c_i)
  \\&= d(x,c_i)(\alpha-1)
  \\&\ge \frac{\diam_d(C_i)}{2 \beta}(\alpha-1)
  \\&\ge  \diam_d(C_i)\frac{(\alpha-1)^2}{2(\alpha+1)}
\end{align}

Suppose instead that $\beta > \frac{\alpha+1}{\alpha-1}$. Since we chose $x' \in C_i$ such that $d(x',c_i)=\beta d(x,c_i)$, we obtain:
\begin{align}
  d(x,c_j) &> d(x',c_j) - d(x,x')
  \\ &> \alpha d(x',c_i) - \big(d(x,c_i)+d(x',c_i)\big) 
  \\ &= \alpha \beta d(x,c_i) - \big(d(x,c_i)+ \beta d(x,c_i)\big) 
  \\ &= d(x,c_i) ((\alpha-1)\beta- 1)
\end{align}
Combining this with~\eqref{eq:dyx}, we obtain:
\begin{align}
	d(y,x) &> d(x,c_i) \left(\frac{\alpha}{\alpha+1} ((\alpha-1)\beta- 1) - \frac{1}{\alpha+1}\right)
	\\ &= d(x,c_i) \left(\frac{\alpha(\alpha-1)\beta}{\alpha+1} - 1 \right)
	\\ &\ge \frac{\diam_d(C_i)}{2 \beta} \left(\frac{\alpha(\alpha-1)\beta}{\alpha+1} - 1 \right)
	\\ &= \diam_d(C_i) \left(\frac{\alpha(\alpha-1)}{2(\alpha+1)} - \frac{1}{2\beta} \right)
	\\ &> \diam_d(C_i) \left(\frac{\alpha(\alpha-1)-(\alpha-1)}{2(\alpha+1)} \right)
	\\ &= \diam_d(C_i) \frac{(\alpha-1)^2}{2(\alpha+1)}
\end{align}
Hence, in all cases, we obtain $d(y,x) > \diam_d(C_i) \frac{(\alpha-1)^2}{2(\alpha+1)}$. This concludes the proof.

\section{Proof of Lemma~\ref{lem:H_properties}}
\Hlemma*
For the first claim, we start by showing that $H=I(H)$. Let $C_1,C_2 \in H$. We show that $C := C_1 \cap C_2 \in H$, too. Consider any $y \in X \setminus C$, and without loss of generality assume that $y \notin C_1$. Since $C \subseteq C_1$, we have $\diam_d(C_1) \ge \diam_d(C)$ and $d(y,C) \ge d(y,C_1) $. Moreover, $d(y,C_1) > \gamma\, \diam_d(C_1)$ since $C_1 \in H$. Therefore:
\begin{align}
d(y,C) \ge d(y, C_1) > \gamma\, \diam_d(C_1) \ge \gamma\, \diam_d(C)
\end{align}
proving that $d(y, C) > \gamma\, \diam_d(C)$. Since this holds for all $y \in X \setminus C$, we have $C \in H$ as well. Therefore, $H=I(H)$.

Since $H=I(H)$, then $\vcdim(H,X)=\vcdim(I(H),X)$. Now, we use the following results of~\citet{Kivinen95}:
\begin{definition}
\label{def:sldim}[\citet{Kivinen95}, Definition 5.11]
Let $\scX$ be any domain and $\scH \subseteq 2^{\scX}$ be a concept class.
We say that $\scH$ slices $X \subset \scX$ if, for each $x \in X$, there is $h \in \Hyp$ such that $X \cap h = X \setminus \{x\}$. The slicing dimension of $(\Hyp,\scX)$, denoted by $\sldim(\Hyp,\scX)$, is the maximum size of a set sliced by $\Hyp$. If $\Hyp$ slices arbitrarily large sets, then we let $\sldim(\Hyp,\scX)=\infty$.
\end{definition}
\begin{lemma}[\citet{Kivinen95}, Lemma 5.19]\label{lem:sldim_vcdim}
If $\sldim(\Hyp,\scX) < \infty$, then $\vcdim(I(\Hyp),\scX) \le \sldim(\Hyp,\scX)$.
\end{lemma}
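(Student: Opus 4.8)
The plan is to prove the stronger set-theoretic fact that \emph{every finite set shattered by $I(\Hyp)$ is already sliced by $\Hyp$}; taking a supremum over shattered sets then gives $\vcdim(I(\Hyp),\scX) \le \sldim(\Hyp,\scX)$ at once. So I would fix a finite $S \subseteq \scX$ shattered by $I(\Hyp)$ and an arbitrary $x \in S$, and aim to exhibit a single $h \in \Hyp$ with $S \cap h = S \setminus \{x\}$. Since $S$ is shattered, there is a concept $g \in I(\Hyp)$ with $S \cap g = S \setminus \{x\}$, and by the definition of the intersection class $g = h_1 \cap \ldots \cap h_\ell$ for some finite $\ell \ge 1$ and $h_1,\ldots,h_\ell \in \Hyp$, so that $S \setminus \{x\} = S \cap g = \bigcap_{j=1}^{\ell}(S \cap h_j)$.

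The key observation is that $S \setminus \{x\}$ is a co-singleton of $S$, i.e.\ a maximal proper subset. Every element of $S \setminus \{x\}$ lies in the intersection $\bigcap_j(S \cap h_j)$, hence in each individual $S \cap h_j$; together with $S \cap h_j \subseteq S$ this forces $S \cap h_j \in \{\,S \setminus \{x\},\, S\,\}$ for every $j$. Because the full intersection equals $S \setminus \{x\} \ne S$, at least one index $j$ must satisfy $S \cap h_j = S \setminus \{x\}$, and this $h_j \in \Hyp$ is precisely a witness that $\Hyp$ slices $S$ at $x$. Ranging over all $x \in S$ shows $\Hyp$ slices $S$, so by Definition~\ref{def:sldim} we get $|S| \le \sldim(\Hyp,\scX)$; taking the supremum over all $S$ shattered by $I(\Hyp)$ yields the claimed bound (the hypothesis $\sldim(\Hyp,\scX) < \infty$ only serves to make it non-vacuous).

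I do not expect a genuine obstacle here: the proof is purely combinatorial and uses nothing about $\scX$ beyond the set operations. The one point that needs care — and the only place the statement could go wrong if mishandled — is the maximality argument in the second paragraph: it is essential that we realize a subset of size exactly $|S|-1$, since for a general target $T \subsetneq S$ one cannot conclude that a single factor $h_j$ already realizes $T$ (the factors may need to cooperate), whereas for a co-singleton each factor is pinned down to one of only two sets. The remaining steps (reducing shattering to the $|S|$ co-singleton constraints, and passing to the supremum) are routine bookkeeping.
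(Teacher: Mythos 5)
Your proof is correct. Note that the paper does not prove this lemma at all --- it is quoted verbatim from \citet{Kivinen95} (Lemma 5.19) and used as a black box --- and your argument is the standard one for it: a co-singleton $S\setminus\{x\}$ realized by a finite intersection $h_1\cap\ldots\cap h_\ell$ forces some single factor $h_j$ (any one with $x\notin h_j$) to realize it, so every set shattered by $I(\Hyp)$ is sliced by $\Hyp$, which gives the bound (and, as you observe, the finiteness hypothesis is only there to make the statement non-vacuous).
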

We will now show that $\sldim(H,X) \le M$, where $M$ is finite by assumption. By Lemma~\ref{lem:sldim_vcdim}, this will imply $\vcdim(H,X) \le \sldim(H,X)$. 

Let $S \subseteq X$ be any set of size $\sldim(H,X)$ that is sliced by $H$. To show that $\sldim(H,X) \le M$, we need to show that $|S| \le M$. If $|S| \le 2$, then this is trivial, since $M \ge 2$. Suppose then that $|S| \ge 3$. Choose $a,b \in S$ such that $d(a,b)=\diam_{d}(S)$. Since $S$ is sliced by $H$, for any $x \in S$ we must have $S \setminus x = S \cap C$ for some $C \in H$. Since $S \setminus x \subseteq C$, we have $d\big(S \setminus x, x\big) \ge d(C, x)$ and $\diam_{d}(C) \ge \diam_d(S \setminus x)$. Moreover, $d(C, x) > \gamma \, \diam_{d}(C)$, since $x \in X \setminus C$ and $C \in H$. Therefore:
\begin{align}
d\big(S \setminus x, x\big) &\ge d(C, x)
\\ &> \gamma \, \diam_{d}(C)
\\ &\ge \gamma \, \diam_{d}(S \setminus x)
\end{align}
Hence, $d(S \setminus x, x) > \gamma \, \diam_{d}(S \setminus x)$ for all $x \in S$.

Now, suppose first that $\gamma \ge 1$. Since $|S| \ge 3$, any $x \in S \setminus \{a,b\}$ yields the absurd:
\begin{align}
    \diam_{d}(S)
    &\ge d(S \setminus x, x) \\
    &> \diam_{d}(S \setminus x) && \text{ since } \gamma \ge 1 \\
    &= d(a,b) && \text{ since } a,b \in S \setminus x\\
    &= \diam_{d}(S) && \text{ by the choice of } a,b
\end{align}
Hence we must have $|S| \le 2$, which implies again $|S| \le M$. Suppose instead that $\gamma < 1$. Then, for any two distinct points $x,y \in S$, we have $d(x,y) > \gamma \diam_d(S)$. This is trivially true if $x=a$ and $y=b$; otherwise, assuming $x \notin \{a,b\}$, it follows by the fact that $d(x,y) \ge d(S \setminus x, x) > \gamma \diam_{d}(S \setminus x) = \gamma \diam_d(S)$, as seen above. Moreover, $S$ is contained in the closed ball $B(x,\diam_{d}(S))$ for any $x \in S$. Therefore, $\PackNum(B(x,r), \gamma r, d) \ge |S|$ for $r = \diam_{d}(S)$ and some $x \in \scX$. By definition of $M(\gamma,d)$ this implies that $|S| \le M(\gamma,d)$, and $M(\gamma,d) \le M$. This concludes the proof.

\section{Proof of Theorem~\ref{thm:ova_margin}}
\ovatheorem*
For the lower bounds, let $\gamma'=2 \gamma$. By definition of $M(\gamma')$, there exists a set $X$ of $M(\gamma')$ points that, according to some $d \in \{d_1,\ldots,d_k\}$, lies within a ball of radius $r > 0$, and thus has diameter at most $2 r$, and such that $d(x,y) > \gamma' r = 2 \gamma r$ for all distinct $x,y \in X$. Now choose $x \in X$ uniformly at random, and define $\C=(x, X \setminus x)$. The argument above shows that $d(x, X \setminus x) > \gamma \diam_d(X)$, which implies that $\C$ has one-versus-all margin $\gamma$. Clearly, in expectation over the distribution of $\C$, any exact cluster recovery algorithm must make $\Omega(|X|)=\Omega(M(2\gamma))$ queries. By Yao's principle for Monte Carlo algorithms, then, any such algorithm makes $\Omega(M(2\gamma))$ queries on some instance.

Let us turn to the upper bounds. The pseudocode of \MarginAlgo\ is given below. As in the proof sketch, for each $i \in [k]$ the class $H_i$ is defined as:
\begin{align}\label{eq:Hi}
   H_i = \left\{C \subseteq X \,:\, d_i(X \setminus C, C) > \gamma \, \diam_{d_i}(C) \right\}
\end{align}
As said in the proof sketch, by Lemma~\ref{lem:H_properties} and by the definition of learning with one-sided error (Definition~\ref{def:oneside}), we have $\hat{C}_i \subseteq C_i$ and therefore \MarginAlgo\ never misclassifies any point, and moreover we have:
\begin{align}
    \Pr\left( \big|C_i \setminus \hat{C}_i\big| \le \epsilon |X| \right) \ge 1 - \delta
\end{align}
In our case, that is, with $\epsilon = \nicefrac{1}{2k}$ and $\delta = \nicefrac{1}{2}$, and since $|C_i \setminus \hat{C}_i| = |C_i| - |\hat{C}_i|$, this yields:
\begin{align}
    \Pr\left( \big|\hat{C}_i\big| \ge |C_i| - \frac{|X|}{2k} \right) \ge \frac{1}{2}
\end{align}
which, since $\big|\hat{C}_i\big| \ge 0$, implies:
\begin{align}
    \E \big|\hat{C}_i\big| \ge \frac{1}{2} \cdot \left(|C_i| - \frac{|X|}{2k}\right) =  \frac{|C_i|}{2} - \frac{|X|}{4k}
\end{align}
By summing over all $i\in [k]$, at each round \MarginAlgo\ correctly labels, and removes from $X$, an expected number of points equal to:
\begin{align}
    \E \left|\hat{C}_1 \cup \ldots \cup \hat{C}_k\right| = \sum_{i=1}^{k} \E \big|\hat{C}_i\big| \ge \sum_{i=1}^{k} \left( \frac{|C_i|}{2} - \frac{|X|}{4k} \right) = \frac{|X|}{2} - \frac{|X|}{4} = \frac{|X|}{4}
\end{align}
Thus, at each round \MarginAlgo\ gets rid of an expected fraction $\nicefrac{1}{4}$ of all points still in $X$. By a standard probabilistic argument this implies that, with high probability, all points are correctly labeled within $\scO(\log n)$ rounds, see Lemma 3 of~\citep{BCLP20}. Therefore, the total number of queries is bounded by $\scO(M k \log k)$ with high probability. This concludes the proof.

\begin{algorithm}
\caption{
\label{alg:mrec}
\MarginAlgo$(X,O)$
}
\begin{algorithmic}
\If{$X = \emptyset$}{ \Return}
\EndIf
\State for each $i \in [k]$ let $H_i$ as in Equation~\ref{eq:Hi}
\State draw a sample $S$ of $|S| = \Theta\big(M k \ln k \big)$ points u.a.r.\ from $X$
\State use $O$ to learn the labels of $S$
\For{$i \in [k]$}
\State let $S_i$ be the subset of $S$ having label $i$
\State let $\hat{C}_i$ be the smallest set in $H_i$ consistent with $S_i$
\State give label $i$ to every $x \in \hat{C}_i$
\EndFor
\State let $X' = X \setminus \cup_{i \in [k]} \hat{C}_i$
\State \MarginAlgo$(X',O)$
\end{algorithmic}
\end{algorithm}

\section{Proof of Theorem~\ref{thm:kclass_learning}}
As said in the sketch, the proof follows the same ideas of the proof of Theorem~\ref{thm:ova_margin}.

For the lower bounds, suppose that $\cosl(\Hyp) < \infty$, and let $X$ be a set cosliced by $\Hyp$ with $|X|=\cosl(\Hyp)$.  We draw a random uniform element $x \in X$, and we consider the clustering $\C=(x, X \setminus x)$. By definition of sliced set, $\C$ is realised by $\Hyp$, and therefore it satisfies the assumptions. Now the same arguments of the lower bounds of Theorem~\ref{thm:ova_margin} imply that any algorithm needs $\Omega(|X|)=\Omega(\cosl(\Hyp))$ queries on some instance to return $\C$. Clearly, if $\cosl(\Hyp) = \infty$ this means that we can take $|X|=n$ arbitrarily large, whence the second lower bound.

For the upper bounds, let $P_k(X)$ be the set of all $k$-clusterings of $X$ that are realised by $\Hyp$. Then, for each $i \in [k]$ we define:
\begin{align}\label{eq:HiHi2}
    H_i = \left\{C' \,:\, C' = C'_i \,\wedge\, (C_1',\ldots,C_k') \in P_k(X) \right\}
\end{align}
As in \MarginAlgo, we learn each class $H_i$ with one-sided error by choosing the smallest hypothesis in $I(H_i)$ that is consistent with $S_i$, for a labeled sample $S$ of size $\Theta(\vcdim(I(H_i),X) \, k \ln k)$. As shown in the proof of Lemma~\ref{lem:H_properties}, a result of~\citet{Kivinen95} implies that if $\sldim(H_i,X) < \infty$ then $\vcdim(I(H_i),X) \le \sldim(H_i,X)$. Therefore, to prove the theorem we only need to show that $\sldim(H_i,X) \le \cosl(\Hyp)$. To this end, suppose that $U=\{x_1,\ldots,x_{\ell}\} \subseteq X$ is sliced by $H_i$. By construction of $H_i$, this means that there are $\ell$ clusterings $\C_1,\ldots,\C_{\ell}$, each one realised by $\Hyp$, such that $\C_i = (\{x_i\}, U \setminus \{x_i\})$ for all $i \in [k]$. This implies that $U$ is cosliced by $\Hyp$. Hence, $|U| \le \cosl(\Hyp)$ and so $\sldim(H_i,X) \le \cosl(\Hyp)$, as claimed. The rest of the proof is similar to the proof of Theorem~\ref{thm:ova_margin}, and shows that at each round we recover an expected constant fraction of all points, and that therefore all points will be recovered with high probability after $\scO(\log n)$ rounds. This shows that we can recover $\C$ by making with high probability at most $\scO(\cosl(\Hyp) k \log k \log n)$ queries.

\section{Proof of Theorem~\ref{thm:affine}}
\affinetheorem*
The upper bound follows from Theorem~\ref{thm:ova_margin}, by using the standard fact that in the Euclidean metric the unit ball has covering number $\CoverNum(B(x,1),\epsilon) \le (1+\nicefrac{2}{\epsilon})^m$ for all $\epsilon > 0$. As it is well-known, we have $\PackNum(B(x,1),\epsilon) \le \CoverNum(B(x,1),\nicefrac{\epsilon}{2})$, which for $\epsilon=\gamma$ yields $M(\gamma) \le (1+\nicefrac{4}{\gamma})^m$.

We now prove the lower bound.
Having instances with ``arbitrarily small one-versus-all margin''  means that $\gamma = 0$, see Definition~\ref{def:ova_margin}. Take any $h \in \Hyp$ such that both $h$ and its complement $\bar h$ contain a ball of positive radius. Note that this implies that, for any $\rho > 0$, there exists a closed ball $B$ with radius $r>0$ such that:
\begin{align}
    B \subseteq h \\
    \exists x \in \bar h \,:\, d(B,x) \le \rho
\end{align}
Let $c$ be the center of $B$. Consider a sphere $S$ of radius $r'$ that contains $x$ and whose center $c'$ lies on the affine subspace $x+\alpha(c-x)$. Let $\eta = \sup_{y \in S \setminus B} d(x, y)$ and let $X$ be an $\eta$-packing of $S$. Note that, since $\gamma=0$, we can choose $\rho$ and $r'$ arbitrarily small, and in particular we can make the ratio $\frac{\eta}{r'}$ arbitrarily small. This implies that we can make $X$ arbitrarily large, see Figure~\ref{fig:packing}.
\begin{figure}[h]
    \centering
    \begin{tikzpicture}[scale=.5,every node/.style={circle,inner sep=.8pt}]
    \node[draw] (orig) at (0,0) {};
    \node[above] (origtxt) at (orig) {$c$};
    \draw[draw=gray,fill=gray,fill opacity=.2] (0,0) circle (3.5);
    \node (B) at (3,-2.4) {$B$};
    \node[draw] (c) at (-2.6,0) {};
    \node[above] (cxt) at (c) {$c'$};
    \def\r{1}
    \def\N{9}
    \pgfmathsetmacro\ang{360/\N}
    \draw[densely dotted] (c) circle (\r);
    \foreach \i in {1,...,\N} {
        \node[draw,fill] (x) at ($(c)-(\ang*\i:\r)$) {};
    }
    \node[left] (xtx) at ($(x)$) {$x$};
    \node[right] (Stxt) at ($(x)+(1.6*\r,-\r)$) {$X$};
    \end{tikzpicture}
    \caption{An $\eta$-packing $X$ such that $X \cap h = X \setminus \{x\}$ and $X \cap \bar h = \{x\}$. The ball $B$ is by construction entirely in $h$, whereas $x$ is by construction in $\bar h$.}
    \label{fig:packing}
\end{figure}
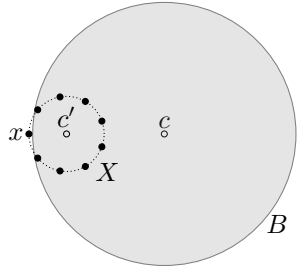

Now, consider $x' \in X \setminus \{x\}$. As by construction $d(x',x) > \eta$, and as $r' < r$, we must have $x' \in B$. Therefore, $x' \in h$. Hence the concept $h_x = h$ is such that $X \cap h_x = X \setminus \{x\}$. Now, for any $x' \in X \setminus \{x\}$, there is a rotation $R$ with fixed point $c'$ and such that $R(x')=x$. Hence, $h_{x'} = R^{-1} h_x$ is such that $X \cap h_{x'} = X \setminus \{x'\}$. Since $R^{-1}$ is an affine transformation, $h_{x'} \in \Hyp$ as well. Hence, for every $x \in X$ there exists some concept $h_x \in \Hyp$ such that $X \cap h_x = X \setminus \{x\}$.

Now consider the complement $\bar{h}$ of $h$. Note that $\bar h \in \co(\Hyp)$. The first part of the argument above can be applied to $\bar h$ as well, showing that for every $x \in X$ there exists $\bar{h_x} \in \co(\Hyp)$ such that $X \cap \bar{h_x} = X \setminus \{x\}$. Now consider the complement $h_x$ of $\bar{h_x}$. Clearly $h_x \in \Hyp$, and moreover, $X \cap h_x = \{x\}$.

Hence, for any $x \in X$ there are two concepts $h_x^-,h_x^+ \in \Hyp$ such that $X \cap h_x^- = X \setminus \{x\}$ and $X \cap h_x^+ = \{x\}$. Hence, every $2$-clustering $\C$ of $X$ in the form $C_1=\{x\}, C_2=X \setminus \{x\}$ is realized by $\Hyp$. Note that this holds with $X$ fixed; we just need to transform the concetps appropriately. It is they immediate to see that any algorithm must perform $\Omega(|X|)$ queries on some instance $(X,O)$. As we can make $X$ arbitrarily large, this completes the proof.

\end{document}